\tikzstyle{tikzfig}=[baseline=-0.25em,scale=0.5]
\tikzstyle{none}=[inner sep=0mm]
\newcommand{\tikzfig}[1]{%
{\tikzstyle{every picture}=[tikzfig]
\IfFileExists{#1.tikz}
  {\input{#1.tikz}}
  {%
    \IfFileExists{./figures/#1.tikz}
      {\input{./figures/#1.tikz}}
      {\tikz[baseline=-0.5em]{\node[draw=red,font=\color{red},fill=red!10!white] {\textit{#1}};}}%
  }}%
}
\tikzstyle{every loop}=[]
\newtheorem{definition}{Definition}
\newtheorem{theorem}{Theorem}
\newtheorem{lemma}{Lemma}
\newtheorem{remark}{Remark}
\newtheorem{example}{Example}
\newcommand{\notion}[1]{\emph{\textbf{#1}}}
\newcommand{\oursgetcomm}[0]{\ensuremath{\mathsf{PRISM}}}
\newcommand{\hit}[0]{\ensuremath{\theta_{hit}}}
\newcommand{\sym}[0]{\ensuremath{\theta_{sym}}}
\newcommand{\js}[0]{\ensuremath{\theta_{JS}}}
\newcommand{\ntop}[0]{\ensuremath{n_{top}}}
\newcommand{\ourdim}[0]{\ensuremath{\delta}}
\newcommand{\ourspaths}[0]{\ensuremath{\mathsf{PathSymmetricClustering}}}
\newcommand{\hcluster}[0]{\ensuremath{\mathsf{HClustering}}}
\newcommand{\lbl}[1]{\ensuremath{\mathsf{label}(#1)}}
\newcommand{\E}{\mathbb{E}}
\newcommand{\N}{\mathcal{N}}
\newcommand{\Var}{\mathrm{Var}}
\newcommand{\Cov}{\mathrm{Cov}}
\DeclareRobustCommand{\qed}{%
  \ifmmode % if math mode, assume display: omit penalty etc.
  \else \leavevmode\unskip\penalty9999 \hbox{}\nobreak\hfill
  \fi
  \quad\hbox{\qedsymbol}}
\newcommand{\openbox}{\leavevmode
  \hbox to.77778em{%
  \hfil\vrule
  \vbox to.675em{\hrule width.6em\vfil\hrule}%
  \vrule\hfil}}
\newcommand{\qedsymbol}{\openbox}
\newenvironment{proof}[1][\proofname]{\par
  \normalfont
  \topsep6\p@\@plus6\p@ \trivlist
  \item[\hskip\labelsep\itshape
    #1.]\ignorespaces
}{%
  \qed\endtrivlist
}
\newcommand{\proofname}{Proof}
\tikzstyle{swr}=[fill=white, draw=black, shape=rectangle]
\tikzstyle{swc}=[fill=white, draw=black, shape=circle, minimum size=0.5 cm, inner sep=0.02cm]
\tikzstyle{mwc}=[fill=white, draw=black, shape=circle, minimum size=0.75cm, inner sep=0.02cm]
\tikzstyle{oval}=[fill=white, draw=black, shape=circle, minimum height=0.5cm, minimum width=1 cm, ellipse, inner sep=0.02cm]
\tikzstyle{mbc}=[fill={rgb,255: red,191; green,191; blue,191}, draw=black, shape=circle, minimum size=0.75cm, inner sep=0.02cm]
\tikzstyle{basic}=[fill=white, draw=black, shape=circle]
\tikzstyle{square}=[fill=white, draw=black, shape=rectangle]
\tikzstyle{big dashed}=[fill=white, draw=black, shape=circle, minimum width=1cm, dashed]
\tikzstyle{vertical ellipse dashed}=[fill=none, draw=blue, minimum width=0.75cm, minimum height=3cm, ellipse, dashed, tikzit shape=rectangle, tikzit draw=blue, tikzit fill=white]
\tikzstyle{small vertical ellipse dashed}=[fill=none, draw=blue, shape=circle, tikzit fill=white, tikzit draw=blue, dashed, minimum width=0.75cm, minimum height=1.5cm, tikzit shape=rectangle, ellipse]
\tikzstyle{tiny vertical ellipse dashed}=[fill=none, draw=blue, shape=circle, tikzit fill=white, ellipse, dashed, minimum width=0.75cm, minimum height=1cm, tikzit shape=rectangle]
\tikzstyle{red}=[fill=red, draw=black, shape=circle]
\tikzstyle{green}=[fill={rgb,255: red,0; green,128; blue,128}, draw=black, shape=circle]
\tikzstyle{blue}=[fill=blue, draw=black, shape=circle]
\tikzstyle{huge dashed}=[fill=white, draw=black, shape=circle, dashed, minimum width=2cm]
\tikzstyle{medium}=[fill=white, draw=black, shape=circle, minimum width=1cm]
\tikzstyle{pale green}=[fill={rgb,255: red,173; green,231; blue,0}, draw=black, shape=circle, minimum width=1cm]
\tikzstyle{horizontal ellipse dashed}=[fill=white, draw=black, tikzit draw=magenta, tikzit shape=rectangle, minimum width=3cm, minimum height=0.75cm, ellipse, dashed]
\tikzstyle{minsize}=[fill=white, draw=black, shape=circle, minimum width=0.75cm]
\tikzstyle{horizontal ellipse green}=[fill={rgb,255: red,191; green,255; blue,0}, draw=black, tikzit draw={rgb,255: red,191; green,255; blue,0}, tikzit shape=rectangle, minimum width=3cm, minimum height=0.75cm, ellipse, dashed]
\tikzstyle{horizontal ellipse blue}=[fill={rgb,255: red,107; green,203; blue,255}, draw=black, tikzit draw=blue, tikzit shape=rectangle, minimum width=3cm, minimum height=0.75cm, ellipse, dashed]
\tikzstyle{smallblack}=[fill=black, draw=black, shape=circle, inner sep=0 pt, minimum size=3 pt]
\tikzstyle{smallSquare}=[fill=white, draw=black, shape=rectangle, inner sep=0 pt, minimum size=6 pt]
\tikzstyle{smallCircle}=[fill=white, draw=black, shape=circle, inner sep=0 pt, minimum size=6 pt]
\tikzstyle{big vertical ellipse dashed}=[fill=none, draw=blue, shape=circle, tikzit shape=rectangle, ellipse, dashed, minimum width=0.95cm, minimum height=3.7cm]
\tikzstyle{smallred}=[fill=red, draw=red, shape=circle, inner sep=0 pt, minimum size=3 pt]
\tikzstyle{redfilled}=[fill={red!20}, draw=red, shape=circle, opacity=0.5]
\tikzstyle{bluefilled}=[fill={blue!20}, draw=blue, shape=circle, opacity=0.5]
\tikzstyle{greenfilled}=[fill={rgb,255: red,149; green,255; blue,179}, draw={rgb,255: red,0; green,107; blue,61}, shape=circle, opacity=0.5]
\tikzstyle{orangefilled}=[fill={orange!20}, draw=orange, shape=circle, opacity=0.5]
\tikzstyle{new style 0}=[fill={rgb,255: red,191; green,191; blue,191}, draw=black, shape=circle]
\tikzstyle{small pink}=[fill=white, draw={rgb,255: red,18; green,162; blue,206}, shape=circle, inner sep=0 pt, minimum size=6 pt]
\tikzstyle{smallred}=[fill=white, draw=red, shape=circle, inner sep=0 pt, minimum size=6 pt]
\tikzstyle{sb}=[-]
\tikzstyle{new edge style 0}=[-, draw=red]
\tikzstyle{directed}=[->, -latex, draw=black]
\tikzstyle{undirected}=[-, line width=1pt, draw={rgb,255: red,128; green,128; blue,128}]
\tikzstyle{directed red}=[draw=red, ->, -latex]
\tikzstyle{directed green}=[draw={rgb,255: red,0; green,128; blue,128}, ->, line width=1pt]
\tikzstyle{directed blue}=[draw=blue, ->, line width=1pt]
\tikzstyle{directed purple}=[draw={rgb,255: red,128; green,0; blue,128}, ->, line width=1pt]
\tikzstyle{undirected red}=[-, draw=red]
\tikzstyle{undirected green}=[-, draw={rgb,255: red,0; green,107; blue,61}, line width=1pt]
\tikzstyle{undirected blue}=[-, draw=blue, line width=1pt]
\tikzstyle{undirected purple}=[-, draw={rgb,255: red,128; green,0; blue,128}, line width=1pt]
\tikzstyle{undirected dashed}=[-, line width=1pt, dashed, draw=black]
\tikzstyle{orange dashed}=[-, draw={rgb,255: red,255; green,128; blue,0}, dashed, line width=1.5pt]
\tikzstyle{directed dash}=[->, dashed]
\tikzstyle{blue dashed}=[-, draw=blue, dashed, line width=1pt]
\tikzstyle{green dashed}=[-, draw={rgb,255: red,0; green,162; blue,0}, dashed, line width=1pt]
\tikzstyle{blue filled}=[-, fill={blue!20}, draw=blue, line width=1pt, opacity=0.5, tikzit fill=white]
\tikzstyle{red filled}=[-, fill={red!20}, line width=1pt, draw=red, opacity=0.5, tikzit fill=white]
\tikzstyle{green filled}=[-, line width=1pt, draw={rgb,255: red,0; green,107; blue,61}, opacity=0.5, tikzit fill=white, fill={rgb,255: red,149; green,255; blue,179}]
\tikzstyle{orange filled}=[-, fill={orange!20}, draw=orange, line width=1pt, opacity=0.5, tikzit fill=white]
\tikzstyle{undirected dashed}=[-, draw={rgb,255: red,128; green,128; blue,128}, dashed, line width=1pt]
\tikzstyle{directed}=[->, -latex, fill=none, draw={rgb,255: red,128; green,128; blue,128}]
\tikzstyle{reddashed}=[-, draw=red, dashed]
\tikzstyle{blackdashed}=[-, dashed, draw=black]
\tikzstyle{black}=[-, ->, -latex, draw=black]
\tikzstyle{pinksolid}=[-, draw={rgb,255: red,18; green,162; blue,206}]
\tikzstyle{pinkdirected}=[-, ->, -latex, draw={rgb,255: red,18; green,162; blue,206}]
\tikzstyle{pink dashed}=[-, dashed, draw={rgb,255: red,18; green,162; blue,206}]
\title{Principled and Efficient Motif Finding for\\ Structure Learning of Lifted Graphical Models}
\author{
    %Authors
    % All authors must be in the same font size and format.
    %Written by AAAI Press Staff\textsuperscript{\rm 1}\thanks{With help from the AAAI Publications Committee.}\\
    %AAAI Style Contributions by Pater Patel Schneider,
    %Sunil Issar,\\
    Jonathan Feldstein\equalcontrib \textsuperscript{\rm 1,2},
    Dominic Phillips\equalcontrib \textsuperscript{\rm 1},
    Efthymia Tsamoura\textsuperscript{\rm 3}
}
\title{My Publication Title --- Single Author}
\author {
    Author Name
}
\title{My Publication Title --- Multiple Authors}
\author {
    % Authors
    First Author Name,\textsuperscript{\rm 1}
    Second Author Name, \textsuperscript{\rm 2}
    Third Author Name \textsuperscript{\rm 1}
}
\begin{document}

\maketitle

\begin{abstract}
\textit{Structure learning} is a core problem in AI central to the fields of \textit{neuro-symbolic AI} and \textit{statistical relational learning}. It consists in automatically learning a logical theory from data. The basis for structure learning is mining repeating patterns in the data, known as \textit{structural motifs}. Finding these patterns reduces the exponential search space and therefore guides the learning of formulas. Despite the importance of motif learning, it is still not well understood. We present the first principled approach for mining structural motifs in \textit{lifted graphical models}, 
languages that blend first-order logic with probabilistic models,  which uses a stochastic process to measure the similarity of entities in the data. 

Our first contribution is an algorithm, which depends on two intuitive hyperparameters: one controlling the uncertainty in the entity similarity measure, and one controlling the softness of the resulting rules. Our second contribution is a preprocessing step where we perform hierarchical clustering on the data to reduce the search space to the most relevant data. Our third contribution is to introduce an $\mathcal{O}(n\ln{n})$ (in the size of the entities in the data) algorithm for clustering structurally-related data. We evaluate our approach using standard benchmarks and show that we outperform state-of-the-art structure learning approaches by up to 6\% in terms of accuracy and up to 80\% in terms of runtime.
\end{abstract}

%%%%%%%%%%%%% INTRODUCTION %%%%%%%%%%%%%%%%%%%%%%%
\section{Introduction}

\textbf{Motivation}
In artificial intelligence, combining statistical and logical representations is a long-standing and challenging aim. The motivation behind combining the two is that logical models can represent heterogenous data and capture causality, while statistical models handle uncertainty \cite{introduction-to-statistical-relational-learning,russell2015unifying}. General approaches to represent structural information are \textit{lifted graphical models} (LGMs), such as \emph{Markov logic networks} (MLNs) \cite{richardson_markov_2006} and \emph{probabilistic soft logic} (PSL) \cite{psl}. These are languages that define Markov random fields in a declarative fashion and are represented as theories of weighted formulas in first-order logic. The versatility of LGMs is reflected in their variety of applications, including bioinformatics \cite{lippi_prediction_2009}, natural language understanding\cite{wu_automatically_2008}, entity linking \cite{singla_entity_2006} and others \cite{chen_knowledge_2014, ha_goal_2011, riedel_collective_2008, crane_investigating_2012}. Recently, they have also been adopted in neurosymbolic frameworks \cite{dpl, ts}.

Unsurprisingly, the quality of a logical theory, that is, the extent to which it models the task it is supposed to solve, has a strong impact on the performance of the downstream applications. Manually optimising formulae to boost performance is a costly, time-consuming and error-prone process that restricts the scope of application. This can raise fundamental criticism against frameworks that require such theories as part of their input \cite{deepproblog,neurasp}. An alternative is the automated learning of LGMs from data, a problem known as \emph{structure learning}. The ultimate goal is to design a general framework that can efficiently learn high-quality models on large datasets in a principled fashion. Several pioneering structure learning algorithms have been developed for MLNs \cite{kok_learning_2005, mihalkova_bottom-up_2007, kok_learning_nodate}. 

\textbf{Problem}
Generally, structure learning consists in searching for formulae in an exponential search space. The naive approach would consist in trying every possible combination of predicates and logical connectives, which is computationally expensive \cite{kok_learning_2005}. Therefore, to reduce computational complexity, every sophisticated structure learner proceeds by searching for formulae within templates. These templates can be user-defined or learnt automatically \cite{mihalkova_bottom-up_2007, kok_alchemy_2010}. 
Every sophisticated learner can thus be summarized in three main steps: S1 - Apply heuristics to abstract-out common, recurrent patterns within the data to be used as templates. S2 - Iteratively generate formulae based on the previously found patterns and evaluate candidate formulae based on how well they generalize to the training data. S3 - Learn the collective weights of the optimal formulae. Remark that finding good templates is the basis for successful structural learning, as it not only reduces the search space but also forms the starting point of the structure learning algorithm and constrains the shape of logical formulae generated in later stages.

For example, the state-of-the-art structure learning algorithm, \textit{Learning using Structural Motifs} (LSM), reduces the search space for formulae by focusing within recurring patterns of commonly connected entities in the relational database \cite{kok_learning_nodate}. The task of mining these patterns involves repeatedly partitioning the entities of a database into symmetrically-equivalent sets relative to a reference entity. These sets are called \textit{(structural) motifs}. Since the entities in a structural motif are symmetric, formula learning only needs to be performed on one entity instead of each separately. Therefore, structural motifs guide the groundings of potential logical formulas of the LGM (S1). 

The key difference between structure learners that do not require user input is how the templates are found. Still, the state-of-the-art suffers from several shortcomings that have a negative impact on the scalability and effectiveness of the full pipeline (S1-S3). Firstly, the symmetry-partitioning algorithm has six unintuitive hyperparameters that need to be calibrated to each dataset. The difficulty of finding these parameters can lead to inaccurate partitioning. Secondly, the main clustering algorithm, a core step to obtain symmetric partitions, has $\mathcal{O}(n^3)$ complexity in the number of entities to partition. This can result in significant slowdowns on databases that are densely connected.  

\textbf{Contributions}
In this work, we design a more principled and scalable algorithm for extracting motifs through symmetry-partitioning (stage S1 of structure learning). In our algorithm, we make three key contributions that overcome the limitations of prior art. In Section \ref{section:ours:parameters}, we address the first limitation and propose a \textit{principled} algorithm using the theoretic properties of hypergraphs to design an approach that uses just two, intuitive hyperparameters: one that controls the uncertainty of the similarity measure of entities in the data and one that controls the softness of the resulting formulae. In Section \ref{section:ours:path-symmetry}, we tackle the issue of \textit{efficiency}. Firstly, we design an alternative $\mathcal{O}(n \ln{n})$ symmetry-partitioning algorithm. Secondly, we propose a pre-processing step where we hierarchically cluster the relational database to reduce the required computation and further improve the guiding of the formulae finding. Beyond the above contributions, we present PRISM (PRincipled Identification of Structural Motifs) a parallelized, flexible, and optimized C++ implementation of the entire algorithm\footnote{https://github.com/jonathanfeldstein/PRISM}. In Section \ref{sec:experiments}, we assess the performance of the developed techniques against LSM and BOOSTR on datasets used as standard benchmarks in the literature. 
  
%%%%%%%%%%%%%%% PRELIMINARIES %%%%%%%%%%%%%%%%%%%%%%%%
\section{Preliminaries}\label{section:preliminaries}

A \notion{hypergraph} $\mathcal{H}=(V,E)$ is a pair of sets of nodes $V = \{v_i\}_{i=0}^{\vert V \vert}$ and hyperedges $E = \{e_i\}_{i=0}^{\vert E \vert}$. A hyperedge $e_i \in E$ is a non-empty subset of the nodes in $\mathcal{H}$. A hypergraph $\mathcal{H}$ is labelled, if each hyperedge in $\mathcal{H}$ is labelled with a categorical value. We use $\lbl{e_i}$ to denote the 
label of the hyperedge $e_i$.
A \notion{path} $\pi$ of length $L$ in $\mathcal{H}$ is an alternating sequence of nodes $v_i$ and hyperedges $e_i$, such that
$v_i, v_{i+1} \in e_{i}$, of the form $(v_0,e_{0},v_1,\dots,v_{L-1},e_{L-1},v_L)$ for ${0 \leq i \leq L-1}$. 
The \notion{diameter} of $\mathcal{H}$ is the maximum length of the shortest path between any two nodes $v_i, v_j$ in $\mathcal{H}$. 
The \notion{signature} of a path $\pi$ is the sequence of the labels of the edges occurring in 
$\pi$, i.e., $(\lbl{e_{0}},\dots,\lbl{e_{L-1}})$.

A \notion{relational database} $\mathcal{D}$ can be represented by a hypergraph $\mathcal{H}=(V,E)$ by defining $V$ to be the union of the constants in $\mathcal{D}$, and defining $E$ such that every $k$-ary ground atom ${R(c_1,\dots,c_k)}$ in $\mathcal{D}$ becomes a hyperedge $e \in E$, with label $R$, whose elements are the nodes corresponding to the constants ${c_1,\dots,c_n}$. 

A \notion{random walk} on $\mathcal{H}$ is a stochastic process that generates paths by traversing edges in $\mathcal{H}$.
The \notion{length} of a random walk is defined as the number of edges traversed in the path.
Let $v_i$ and $v_j$ be two nodes in $\mathcal{H}$.
The \notion{hitting time} $h_{i,j}$ from $v_i$ to $v_j$ is the average number of steps required to reach $v_j$ for the first time with random walks starting from $v_i$. 

The \notion{L-truncated hitting time} $h^{L}_{ij}$ (THT) is the hitting time where the length of the random walk is limited to at most $L$ steps. It is defined recursively as $h^{L}_{ij} = 1 + \sum_k p_{ik}h^{L-1}_{kj}$, where $p_{ij}$ is the transition matrix of the random walk, with $h^{L}_{ij}=0$ if $i=j$, and $h^{L}_{ij} = L$ if $j$ is not reached in $L$ steps. The more short paths that exist between $v_i$ and $v_j$, the shorter the THT. The THT is therefore a measure of the connectedness of nodes.

We denote by $\mathcal{S}^L_{i,j}$ the set of path signatures of lengths up to $L$ that start at $v_i$ and end at $v_j$.
The $L$-\notion{path signature distribution} $P^L_{i,j}$ is 
then the probability distribution over the elements of $\mathcal{S}^L_{i,j}$ under a given random walk process. The \notion{marginal $L$-path signature distribution} $P^{L}_{i,j}\vert_l$ is the marginal probability distribution when only paths of length \textit{exactly} $l \in \{1, 2, \dots, L\}$ are considered.
The quantities $P^L_{i,j}(\sigma)$ and $P^L_{i,j}\vert_l(\sigma)$ respectively denote the probability and marginal probability of path signature $\sigma$. With this, we now introduce the important notion of \notion{path-symmetry}.

\begin{definition} [Path-Symmetry] \label{definition:path-symmetric}
	Nodes $v_j$ and $v_k$ are order-$L$ path symmetric with respect to $v_i$ if
	$P^L_{i,j} = P^L_{i,k}$ and are exact order-$l$ path symmetric w.r.t. $v_i$ if $P^L_{i,j}\vert_l = P^L_{i,k}\vert_l$. A set of nodes is (exact) path-symmetric w.r.t $v_i$ if each node in the set is (exact) path-symmetric w.r.t. $v_i$. 
\end{definition}

Within the context of structure learning, path-symmetric sets of nodes correspond to what we denote as \notion{abstract concepts} and correspond to collections of entities that have similar neighbourhoods in the hypergraph.

\begin{remark}
\label{distpathsymremark}
A necessary condition for nodes $v_j$ and $v_k$ to be order-$L$ path-symmetric w.r.t. $v_i$ is that they are \notion{order-$L$ distance symmetric} w.r.t. $v_i$, i.e. $h^L_{i,j} = h^L_{i,k}$.
\end{remark}

It is computationally infeasible to compute $h^{L}_{i,j}$ and $P^L_{i,j}$ exactly for large hypergraphs. However, they can both be well-approximated by sampling by running $N$ random walks of length $L$ from node $v_i$ and recording the number of times $v_j$ is hit \cite{sarkar_fast_2008}. We denote by $\hat{h}^{L,N}_{i,j}$,
and by $\hat{P}^{L,N}_{i,j}$, the obtained estimates and refer to them as $(L,N)$ estimates. 
Finally, we denote by $\hat{C}^{L,N}_{i,j}(\sigma)$ the function from a signature $\sigma$ in $\mathcal{S}^L_{i,j}$ to the number of occurrences of $\sigma$ in the paths from $v_i$ to $v_j$ that are encountered while running $N$ random walks of length $L$. We refer to $\hat{C}^{L,N}_{i,j}(\sigma)$ as the $L$-\notion{path signature counts}. 

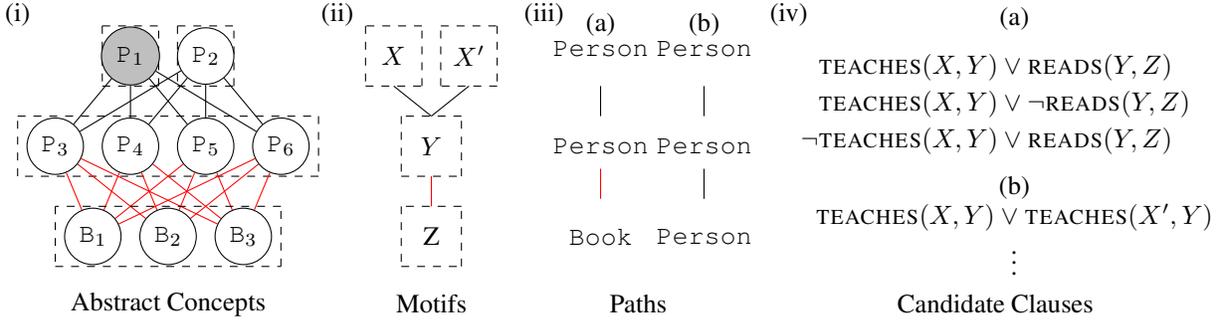
\begin{figure*}[t]
\centering
\begin{tikzpicture}
	\begin{pgfonlayer}{nodelayer}
		\node [style=mwc] (0) at (-3.5, 0.125) {$\texttt{P}_4$};
		\node [style=mwc] (1) at (-2.5, 0.125) {$\texttt{P}_5$};
		\node [style=mwc] (2) at (-1.5, 0.125) {$\texttt{P}_6$};
		\node [style=mwc] (3) at (-4.5, 0.125) {$\texttt{P}_3$};
		\node [style=new style 0] (4) at (-3.5, 1.325) {$\texttt{P}_1$};
		\node [style=mwc] (5) at (-2.5, 1.325) {$\texttt{P}_2$};
		\node [style=none] (6) at (-5, 0.525) {};
		\node [style=none] (7) at (-5, -0.275) {};
		\node [style=none] (8) at (-1, -0.275) {};
		\node [style=none] (9) at (-1, 0.525) {};
		\node [style=none] (10) at (-3.875, 1.725) {};
		\node [style=none] (11) at (-3.125, 1.725) {};
		\node [style=none] (12) at (-3.875, 0.925) {};
		\node [style=none] (13) at (-3.125, 0.925) {};
		\node [style=none] (14) at (-2.125, 0.925) {};
		\node [style=none] (15) at (-2.125, 1.725) {};
		\node [style=none] (16) at (-0.375, 1.725) {};
		\node [style=none] (17) at (-0.375, 0.925) {};
		\node [style=none] (18) at (0.375, 0.925) {};
		\node [style=none] (19) at (0.375, 1.725) {};
		\node [style=none] (20) at (0.1, 0.525) {};
		\node [style=none] (21) at (0.1, -0.275) {};
		\node [style=none] (22) at (0.9, -0.275) {};
		\node [style=none] (23) at (0.9, 0.525) {};
		\node [style=none] (24) at (0.5, 0.125) {$Y$};
		\node [style=none] (25) at (0, 1.325) {$X$};
		\node [style=none] (26) at (0, 0.925) {};
		\node [style=none] (27) at (0.5, 0.525) {};
		\node [style=none] (30) at (-3, -1.975) {Abstract Concepts};
		\node [style=none] (31) at (0.5, -1.975) {Motifs};
		\node [style=mwc] (32) at (-4, -1.075) {$\texttt{B}_1$};
		\node [style=mwc] (33) at (-3, -1.075) {$\texttt{B}_2$};
		\node [style=mwc] (34) at (-2, -1.075) {$\texttt{B}_3$};
		\node [style=none] (35) at (-4.5, -0.675) {};
		\node [style=none] (36) at (-4.5, -1.475) {};
		\node [style=none] (37) at (-1.5, -1.475) {};
		\node [style=none] (38) at (-1.5, -0.675) {};
		\node [style=none] (39) at (0.1, -0.675) {};
		\node [style=none] (40) at (0.9, -0.675) {};
		\node [style=none] (41) at (0.1, -1.475) {};
		\node [style=none] (42) at (0.9, -1.475) {};
		\node [style=none] (43) at (0.5, -0.275) {};
		\node [style=none] (44) at (0.5, -1.075) {Z};
		\node [style=none] (48) at (2.75, 1.425) {\texttt{Person}};
		\node [style=none] (49) at (2.75, 0.125) {\texttt{Person}};
		\node [style=none] (50) at (2.75, -1.075) {\texttt{Book}};
		\node [style=none] (52) at (0.5, -0.675) {};
		\node [style=none] (53) at (2.75, 0.925) {};
		\node [style=none] (54) at (2.75, 0.525) {};
		\node [style=none] (55) at (2.75, -0.175) {};
		\node [style=none] (56) at (2.75, -0.575) {};
		\node [style=none] (57) at (3.25, -1.975) {Paths};
		\node [style=none] (60) at (8, 0.675) {{$\begin{aligned}\textsc{teaches}(X,Y) &\vee \textsc{reads}(Y,Z)\\ \textsc{teaches}(X,Y) &\vee \neg\textsc{reads}(Y,Z)\\ \neg\textsc{teaches}(X,Y) &\vee \textsc{reads}(Y,Z)\end{aligned}$}};
		\node [style=none] (61) at (8, -1.975) {Candidate Clauses};
		\node [style=none] (64) at (-5, 1.875) {(i)};
		\node [style=none] (65) at (-0.75, 1.875) {(ii)};
		\node [style=none] (66) at (2, 1.875) {(iii)};
		\node [style=none] (67) at (5.25, 1.875) {(iv)};
		\node [style=none] (72) at (-2.875, 1.725) {};
		\node [style=none] (73) at (-2.875, 0.925) {};
		\node [style=none] (80) at (0.625, 0.925) {};
		\node [style=none] (81) at (1, 0.925) {};
		\node [style=none] (82) at (0.625, 1.725) {};
		\node [style=none] (83) at (1.375, 1.725) {};
		\node [style=none] (84) at (1.375, 0.925) {};
		\node [style=none] (85) at (1, 1.325) {$X'$};
		\node [style=none] (86) at (4.125, 1.425) {\texttt{Person}};
		\node [style=none] (87) at (4.125, 0.125) {\texttt{Person}};
		\node [style=none] (88) at (4.125, -1.075) {\texttt{Person}};
		\node [style=none] (89) at (4.125, 0.925) {};
		\node [style=none] (90) at (4.125, 0.525) {};
		\node [style=none] (91) at (4.125, -0.175) {};
		\node [style=none] (92) at (4.125, -0.575) {};
		\node [style=none] (94) at (2.75, 1.75) {(a)};
		\node [style=none] (95) at (4.125, 1.75) {(b)};
		\node [style=none] (96) at (8.25, -0.825) {{$\textsc{teaches}(X,Y) \vee \textsc{teaches}(X',Y)$}};
		\node [style=none] (98) at (8.25, -1.325) {$\vdots$};
		\node [style=none] (99) at (8.25, 1.825) {(a)};
		\node [style=none] (100) at (8.25, -0.425) {(b)};
	\end{pgfonlayer}
	\begin{pgfonlayer}{edgelayer}
		\draw (4) to (3);
		\draw (4) to (0);
		\draw (4) to (1);
		\draw (4) to (2);
		\draw (5) to (2);
		\draw (5) to (1);
		\draw (5) to (0);
		\draw (5) to (3);
		\draw [style=blackdashed] (6.center) to (9.center);
		\draw [style=blackdashed] (9.center) to (8.center);
		\draw [style=blackdashed] (8.center) to (7.center);
		\draw [style=blackdashed] (7.center) to (6.center);
		\draw [style=blackdashed] (10.center) to (11.center);
		\draw [style=blackdashed] (10.center) to (12.center);
		\draw [style=blackdashed] (12.center) to (13.center);
		\draw [style=blackdashed] (13.center) to (11.center);
		\draw [style=blackdashed] (16.center) to (17.center);
		\draw [style=blackdashed] (17.center) to (18.center);
		\draw [style=blackdashed] (18.center) to (19.center);
		\draw [style=blackdashed] (19.center) to (16.center);
		\draw [style=blackdashed] (20.center) to (21.center);
		\draw [style=blackdashed] (21.center) to (22.center);
		\draw [style=blackdashed] (22.center) to (23.center);
		\draw [style=blackdashed] (23.center) to (20.center);
		\draw (26.center) to (27.center);
		\draw [style=undirected red] (3) to (32);
		\draw [style=undirected red] (0) to (33);
		\draw [style=undirected red] (1) to (33);
		\draw [style=undirected red] (0) to (32);
		\draw [style=undirected red] (1) to (34);
		\draw [style=undirected red] (2) to (34);
		\draw [style=undirected red] (2) to (33);
		\draw [style=undirected red] (3) to (33);
		\draw [style=undirected red] (1) to (32);
		\draw [style=undirected red] (0) to (34);
		\draw [style=undirected red] (2) to (32);
		\draw [style=undirected red] (3) to (34);
		\draw [style=blackdashed] (35.center) to (36.center);
		\draw [style=blackdashed] (36.center) to (37.center);
		\draw [style=blackdashed] (37.center) to (38.center);
		\draw [style=blackdashed] (38.center) to (35.center);
		\draw [style=blackdashed] (39.center) to (41.center);
		\draw [style=blackdashed] (41.center) to (42.center);
		\draw [style=blackdashed] (42.center) to (40.center);
		\draw [style=blackdashed] (40.center) to (39.center);
		\draw [style=undirected red] (43.center) to (52.center);
		\draw [style=undirected red, in=90, out=-90] (55.center) to (56.center);
		\draw [in=90, out=-90] (53.center) to (54.center);
		\draw [style=blackdashed, in=180, out=0] (73.center) to (14.center);
		\draw [style=blackdashed] (14.center) to (15.center);
		\draw [style=blackdashed] (72.center) to (15.center);
		\draw [style=blackdashed, in=90, out=-90] (72.center) to (73.center);
		\draw [style=blackdashed] (83.center) to (84.center);
		\draw [style=blackdashed] (84.center) to (80.center);
		\draw [style=blackdashed] (80.center) to (82.center);
		\draw [style=blackdashed] (82.center) to (83.center);
		\draw (81.center) to (27.center);
		\draw [in=90, out=-90] (91.center) to (92.center);
		\draw (89.center) to (90.center);
	\end{pgfonlayer}
\end{tikzpicture}
\caption{Example Structure-Learning Pipeline: The above shows a dataset about a university class. Nodes $\texttt{P}_i$ are entities of type \texttt{person}, while $\texttt{B}_i$ are entities of type $\texttt{book}$. Black edges represent $\textsc{teaches}(\texttt{person},\texttt{person})$, and red edges represent $\textsc{reads}(\texttt{person},\texttt{book})$: (i) The resulting abstract concepts when random walks are run from source node $\texttt{P}_1$. Dashed boxes represent concepts, which intuitively are teachers, $\{\texttt{P}_1\}$, colleagues $\{\texttt{P}_2\}$, students $\{\texttt{P}_3, \texttt{P}_4, \texttt{P}_5, \texttt{P}_6\}$ and books $\{\texttt{B}_1, \texttt{B}_2, \texttt{B}_3\}$ (ii) the resulting structural motif, (iii) paths found in the motif (iv) mined candidate clauses.} \label{fig:hc_concept}
\end{figure*}
 
We denote by $\hat{C}^{L,N}_{i,j}\vert_l(\sigma)$ the marginal count when only contributions from paths of length exactly $l \in \{1, 2, \dots, L\}$ are considered.  
For readability purposes, we will drop the term signature and refer simply to $L$-\notion{path distributions} and to $L$-\notion{path counts}. 
By path, we will refer to a path signature, unless stated otherwise. 

\subsection{Example of a Structure Learner: LSM} \label{section:preliminaries:lsm}

To illustrate structure learning, we present an overview of the LSM algorithm \cite{kok_learning_nodate}. The algorithm proceeds in three main steps (denoted S1, S2 and S3 below). The resulting pipeline is summarised in Fig \ref{fig:hc_concept}.

\textbf{S1: Finding Structural Motifs} Nodes with similar environments in the database hypergraph are first clustered together into \textit{abstract concepts}. Clustering is achieved by running many random walks from each node in the hypergraph. Nodes are then partitioned into sets of path-symmetric nodes based on the similarity of their $L$-path counts. Each path-symmetric sets then corresponds to an abstract concept.
\begin{example}[Abstract Concepts]
In Fig \ref{fig:hc_concept}, we see that $\texttt{P}_1$ and $\texttt{P}_2$ are both teaching $\texttt{P}_3$, $\texttt{P}_4$, $\texttt{P}_5$ and $\texttt{P}_6$. Furthermore, $\texttt{P}_3$, $\texttt{P}_4$, $\texttt{P}_5$ and $\texttt{P}_6$ are all reading \texttt{B$_1$}, \texttt{B$_2$} and \texttt{B$_3$}. Even though we have not explicitly defined the notion of \textit{student}, \textit{teacher}, and \textit{book}  we have that $\texttt{P}_3$, $\texttt{P}_4,$ $\texttt{P}_5$ and $\texttt{P}_6$ are all path-symmetric w.r.t to $\texttt{P}_1$ and w.r.t $\texttt{P}_2$, as are \texttt{B$_1$}, \texttt{B$_2$} and \texttt{B$_3$}. The abstract concepts that we obtain are thus $\{\texttt{P}_3, \texttt{P}_4, \texttt{P}_5, \texttt{P}_6\}$, $\{\texttt{P}_1, \texttt{P}_2\}$, and $\{\texttt{B$_1$}, \texttt{B$_2$},\texttt{B$_3$}\}$, which intuitively represent the idea of students, teachers and books, respectively.
\end{example}
Once abstract concepts are found, they are then joined by the edges that connect them to form \textit{structural motifs}, see Fig \ref{fig:hc_concept} (ii). It is the identification of these structural motifs that effectively speeds up the subsequent rule-finding by reducing the search for candidate clauses (c.f. S2). In LSM, computing motifs requires setting six independent hyper-parameters: $N$ the number of random walks ran, $L$ the length of each random walk,  $\hit$ a threshold to select only `nearby' nodes to the source node of the random walk (those with $\hat{h}^{L,N}_{i,j} \leq \hit$), $\sym$ a threshold for merging nodes based on the similarity of their THTs (all nodes $v_j$ and $v_k$ with $|\hat{h}^{L,N}_{i,j} - \hat{h}^{L,N}_{i,k}| < \sym$ are merged), $\js$ a threshold for merging nodes by path similarity based on the Jensen-Shannon divergence of their path distributions, and $\ntop$ the number of paths to consider (in order of descending frequency) when computing the Jensen-Shannon divergence.

\textbf{S2a: Finding Paths in Motifs} Based on the found motifs, sequences (paths) of ground literals that often appear together in the data are generated, see Fig \ref{fig:hc_concept} (iii). The fact that the literals appear often together points to the fact that they are likely to be logically dependent on one another.

\textbf{S2b: Evaluating Candidate Clauses} The sequences of ground literals are used to generate candidate clauses. Each clause is evaluated using a likelihood function. The best clauses are then added to the structure-learnt MLN.

\textbf{S3: Learning the Weights of Candidate Clauses} Finally, the algorithm finds the weights of the chosen clauses by maximum-likelihood estimation. This yields a set of formula-weight pairs which define the final MLN.

%%%%%%%%%%%% CORE %%%%%%%%%%%%%%%%%%%%%%%%%%%%%%%%

\section{Principled Motif-Finding}\label{section:ours:parameters}

Hyperparameter tuning can be one of the most time-costly stages when applying algorithms to real problems. This holds particularly in the case of LSM, where we have six heuristic hyperparameters, as detailed in Section \ref{section:preliminaries:lsm}. In our work, we devise an alternative motif-finding algorithm (PRISM) that depends on only two \textit{intuitive} hyperparameters, thus greatly speeding up the workflow.

\subsection{Introducing PRISM}
In overview, the steps taken by PRISM are:

For each node $v_i$ in $\mathcal{H}$: (i) run an \textit{optimal} number of random walks originating from $v_i$ and compute, for each $v_j \neq v_i$, the THT estimate $\hat{h}^{L,N}_{i,j}$ and path distribution estimate $\hat{P}^{L,N}_{i,j}$; (ii) partition the nodes $V \in \mathcal{H}$ into sets $A_1,A_2,\dots,A_M$, that are \textit{statistically significant} order-$L$ distance-symmetric w.r.t. $v_i$, by merging nodes if the difference in their THTs is below a statistical threshold $\sym$. We describe how to set $\sym$ in Section \ref{section:ours:significance}; (iii) further partition the nodes within each $A_m$ into \textit{statistically significant} order-$L$ path-symmetric sets. An algorithm achieving this in $\mathcal{O}(n \ln{n})$ (vs $\mathcal{O}(n^3)$ in SOTA) is presented later. 
Notice that step (ii) serves to reduce the computational cost of step (iii) by applying heuristics that classify the nodes into sets that are most likely to be path-symmetric. 

The question remains how to define `optimal' and `statistically significant' in the above pipeline. To this end, we introduce two independent parameters, $\varepsilon$ to optimise 
the number of random walks, and $\alpha$ to control the statistical significance threshold of the similarity measure.

\subsection{$\varepsilon$-uncertainty: Controlled Path Sampling}\label{section:ours:uncertainty}

\textbf{Motivation} To find good motifs we need to identify abstract concepts. To do this, we compare the path distributions of nodes in the hypergraph representation of the database. However, in real-world applications, computing these distributions exactly is infeasible, so we resort to approximating them through sampling by running random walks. The uncertainty in these approximations will depend on the length $L$, and number $N$ of random walks. Here we formally define a measure of uncertainty and show how it can be used to set an optimal number of random walks.

\begin{definition}[$\varepsilon$-uncertainty]\label{definition:uncertainty}
The uncertainty of the $(L,N)$-estimate of ${h}_{i,j}$ is defined by ${|{h}^L _{i,j} - \hat{h}^{L,N}_{i,j}|}/{{h}^L_{i,j}}$. 
The uncertainty of the $(L,N)$-estimate of ${P}^L_{i,j}$ is defined as the maximum of ${|{P}^L_{i,j}(\sigma) - \hat{P}^{L,N}_{i,j}(\sigma)|}/{{P}^L_{i,j}(\sigma)}$ among all paths $\sigma$ in the domain of ${P}^L_{i,j}$.
\end{definition}
$\varepsilon$-uncertainty is of major importance to the overall theory-induction pipeline as it determines the confidence in the similarity measure between nodes and, ultimately, of the induced theories; the lower the $\varepsilon$, the higher the confidence in the relatedness of nodes. However, naturally, there is a trade-off, as we show below (Thm. \ref{theorem:optimality}), as lower uncertainty implies a polynomially higher computational cost. For a given $\varepsilon$ we thus seek to find the least number of random walks $N$ that guarantees this uncertainty level. We say that such an $N$ is \textit{$\varepsilon$-optimal}: 

\begin{definition}[$\varepsilon$-optimality]
$N$ is $\varepsilon$-optimal on $\mathcal{H}$ under $L$ if it is the smallest integer so that for any pair of nodes $v_i,v_j$ in $\mathcal{H}$, the expectation of the uncertainties of $(L,N)$-estimates of $h_{i,j}$ and $P_{i,j}$ are upper bounded by $\varepsilon$. 
\end{definition}

Minimising $N$ is crucial as running random walks is computationally intensive, especially in large hypergraphs.

\textbf{Usage} In Theorem~ \ref{theorem:optimality} below, we state how to set $N$ to guarantee $\varepsilon$-optimality (for all theorem proofs, see the Appendix). 

\begin{theorem} \label{theorem:optimality}
An upper bound on the $\varepsilon$-optimal number of random walks $N$ on $\mathcal{H}$ under $L$ is given by 
    \begin{align}
        \max\{{(L-1)^2}/{4\varepsilon^2} , {P^{*}\left(\gamma + \ln P^{*}\right)}/{\varepsilon^2}  \}
    \end{align}
    where ${P^{*} = 1 + {e\left(e^{L}-1\right)}/({e-1}) \gg 1}$, $e$ is the number of unique edge labels in $\mathcal{H}$, and $\gamma \approx 0.577$ is the Euler-Mascheroni constant. 
\end{theorem}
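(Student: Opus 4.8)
The plan is to split the optimality requirement into its two ingredients and bound each separately, then take the maximum. Since $N$ being $\varepsilon$-optimal means it is the \emph{smallest} integer for which the expected uncertainties of both the THT estimate $\hat{h}^{L,N}_{i,j}$ and the path-distribution estimate $\hat{P}^{L,N}_{i,j}$ are at most $\varepsilon$, and since expected uncertainty is monotonically decreasing in $N$ (more walks cannot increase the error), any $N$ that is simultaneously a sufficient sample size for both conditions is feasible. Thus if $N_h$ guarantees the THT condition and $N_P$ guarantees the path-distribution condition, the $\varepsilon$-optimal $N$ satisfies $N \le \max\{N_h, N_P\}$, and it remains to produce $N_h$ and $N_P$ matching the two terms in the statement.

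For the THT term, the key observation is that $\hat{h}^{L,N}_{i,j}$ is the sample mean of $N$ i.i.d. copies of the single-walk truncated hitting time, which is an unbiased estimator of $h^L_{i,j}$ and takes values in the bounded range $[1,L]$. By Jensen's inequality the expected absolute deviation is at most the standard deviation, $\E|h^L_{i,j}-\hat{h}^{L,N}_{i,j}| \le \sqrt{\Var(\hat{h}^{L,N}_{i,j})} = \sqrt{\Var(\text{single walk})/N}$, and Popoviciu's inequality bounds the variance of a $[1,L]$-valued random variable by $(L-1)^2/4$. Dividing by $h^L_{i,j}\ge 1$ and requiring the result to be at most $\varepsilon$ yields $N_h = (L-1)^2/(4\varepsilon^2)$, which is exactly the first term.

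For the path-distribution term, I would first count the support: a signature is a sequence of at most $L$ edge labels, so with $e$ distinct labels there are $\sum_{l=0}^{L} e^l = P^*$ possible signatures, and the empirical distribution lives on at most $P^*$ atoms. Each $\hat{P}^{L,N}_{i,j}(\sigma)$ is a binomial proportion concentrating around $P^L_{i,j}(\sigma)$, and the uncertainty is the maximum relative error over these atoms. I would control this maximum by a concentration-plus-union-bound argument over the $P^*$ atoms; the rarest atom is the bottleneck, so the worst case is the (near-)uniform distribution in which each atom carries mass $\approx 1/P^*$. Ordering the atoms and summing the per-atom sampling requirements produces a harmonic number $H_{P^*} = \gamma + \ln P^* + o(1)$ (this is where the Euler--Mascheroni constant enters, exactly as in the coupon-collector expected-completion time $P^* H_{P^*}$), and folding in the $1/\varepsilon^2$ accuracy factor gives $N_P = P^*(\gamma+\ln P^*)/\varepsilon^2$.

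The main obstacle is the path-distribution bound; the THT part is routine (unbiasedness, Popoviciu, Jensen). The difficulty in the path part is twofold: first, taking the expectation of a maximum over exponentially many ($P^*$) correlated binomial relative errors, which cannot be handled by a naive $\E[\max]\le\sum\E$ without inflating the constant; and second, justifying the reduction to the (near-)uniform worst case, since in full generality a single very rare signature makes the relative error unbounded, so the argument must lean on the coupon-collector structure of equiprobable atoms to recover the clean harmonic-number form and pin down the constant $\gamma$. I would spend most of the effort making this last reduction precise.
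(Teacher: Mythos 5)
Your treatment of the truncated-hitting-time term is correct and is essentially the paper's own argument: Popoviciu's inequality bounds the single-walk variance by $(L-1)^2/4$, the sample mean then has standard deviation at most $(L-1)/(2\sqrt{N})$, and dividing by $h^L_{i,j}\ge 1$ yields $N_h=(L-1)^2/(4\varepsilon^2)$; your Jensen step (expected absolute deviation bounded by the standard deviation) even makes rigorous a point the paper leaves implicit. The overall decomposition into $\max\{N_h,N_P\}$ is likewise exactly the paper's structure.

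The gap is in the path-distribution term. The bound $P^*(\gamma+\ln P^*)/\varepsilon^2$ is not a distribution-free statement, and it cannot be rescued by a reduction to a ``(near-)uniform worst case'': as you yourself observe, the supremum over all distributions is infinite (an arbitrarily rare signature has arbitrarily large relative error), while conversely the uniform distribution on $P^*$ atoms gives each atom mass $1/P^*$, hence per-atom relative error $\sqrt{(P^*-1)/N}$ and a requirement of only $N\approx P^*/\varepsilon^2$ --- the harmonic factor never appears. ``Summing the per-atom sampling requirements'' is also not a legitimate operation, since all atoms are estimated from the same $N$ walks; the requirement is a maximum over atoms, not a sum, so no coupon-collector accounting arises. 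What the paper actually does is impose an explicit distributional assumption: the path probabilities are taken to be Zipfian, $P^L_{i,j}(k)=1/(kZ)$ with normalizer $Z=\sum_{m=1}^{P}1/m$, justified by rank-frequency laws for random sampling of complex systems and verified empirically on the experimental datasets. Under this assumption the rarest atom (rank $P$) has probability $1/(PZ)$, its binomial relative error is $\sqrt{(PZ-1)/N}$, and requiring this to be at most $\varepsilon$ gives $N=\bigl(P\sum_{m=1}^{P}1/m-1\bigr)/\varepsilon^2$, which is then bounded using $P\le P^*$ and $\sum_{m=1}^{P^*}1/m\approx\gamma+\ln P^*$. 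So the Euler--Mascheroni constant enters as the Zipf normalization constant, not through coupon-collector structure; without the Zipfian hypothesis (or some comparable assumption on the rank-frequency decay) the stated bound is simply not derivable, and that hypothesis is the missing ingredient in your proposal.
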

In PRISM, $N$ is automatically computed according to Theorem \ref{theorem:optimality} based on a user-specified $\varepsilon$. In the above, we assumed a fixed $L$. A good value for $L$ is the diameter of the hypergraph to guarantee that every node can be hit during random walks. In Section \ref{section:h-clustering} we revise this assumption and show how $L$ can be reduced based on the properties of $\mathcal{H}$.

\subsection{$\alpha$-significance: Controlled Softness of Formulae}\label{section:ours:significance}

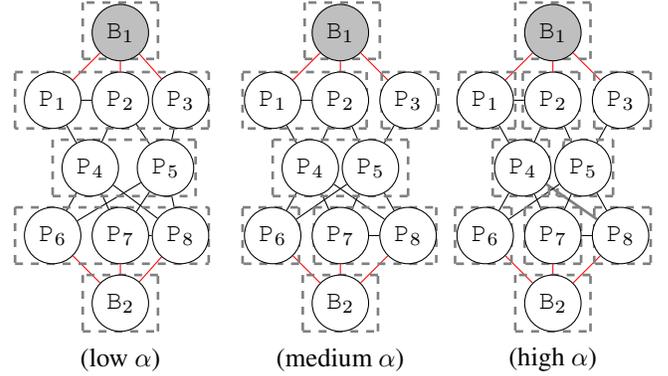
\begin{figure}[htb]
    \centering
    \begin{tikzpicture}
	\begin{pgfonlayer}{nodelayer}
		\node [style=mwc] (0) at (-3.325, 0) {$\texttt{P}_4$};
		\node [style=mwc] (1) at (-2.325, 0) {$\texttt{P}_5$};
		\node [style=mwc] (2) at (-3.825, 0.9) {$\texttt{P}_1$};
		\node [style=mwc] (3) at (-2.925, 0.9) {$\texttt{P}_2$};
		\node [style=mwc] (4) at (-2.125, 0.9) {$\texttt{P}_3$};
		\node [style=mbc] (5) at (-2.925, 1.8) {$\texttt{B}_1$};
		\node [style=mwc] (6) at (-3.825, -0.9) {$\texttt{P}_6$};
		\node [style=mwc] (7) at (-2.925, -0.9) {$\texttt{P}_7$};
		\node [style=mwc] (8) at (-2.125, -0.9) {$\texttt{P}_8$};
		\node [style=mwc] (9) at (-2.925, -1.8) {$\texttt{B}_2$};
		\node [style=none] (24) at (-3.825, 0.375) {};
		\node [style=none] (27) at (-3.825, -0.375) {};
		\node [style=none] (28) at (-1.925, 0.375) {};
		\node [style=none] (29) at (-1.925, -0.375) {};
		\node [style=none] (30) at (-4.325, 1.275) {};
		\node [style=none] (31) at (-1.75, 1.275) {};
		\node [style=none] (32) at (-1.75, 0.525) {};
		\node [style=none] (33) at (-4.325, 0.525) {};
		\node [style=none] (34) at (-4.325, -0.525) {};
		\node [style=none] (35) at (-4.325, -1.275) {};
		\node [style=none] (36) at (-1.75, -1.275) {};
		\node [style=none] (37) at (-1.75, -0.525) {};
		\node [style=none] (38) at (-3.425, -1.425) {};
		\node [style=none] (39) at (-2.425, -1.425) {};
		\node [style=none] (40) at (-2.425, -2.175) {};
		\node [style=none] (41) at (-3.425, -2.175) {};
		\node [style=none] (77) at (-1.625, 0) {};
		\node [style=none] (79) at (-2.925, -2.55) {(low $\alpha$)};
		\node [style=none] (80) at (0, -2.55) {(medium $\alpha$)};
		\node [style=mwc] (81) at (-0.4, 0) {$\texttt{P}_4$};
		\node [style=mwc] (82) at (0.4, 0) {$\texttt{P}_5$};
		\node [style=mwc] (83) at (-0.9, 0.9) {$\texttt{P}_1$};
		\node [style=mwc] (84) at (0, 0.9) {$\texttt{P}_2$};
		\node [style=mwc] (85) at (0.9, 0.9) {$\texttt{P}_3$};
		\node [style=mbc] (86) at (0, 1.8) {$\texttt{B}_1$};
		\node [style=mwc] (87) at (-0.9, -0.9) {$\texttt{P}_6$};
		\node [style=mwc] (88) at (0, -0.9) {$\texttt{P}_7$};
		\node [style=mwc] (89) at (0.9, -0.9) {$\texttt{P}_8$};
		\node [style=mwc] (90) at (0, -1.8) {$\texttt{B}_2$};
		\node [style=none] (91) at (-0.9, 0.375) {};
		\node [style=none] (92) at (-0.9, -0.375) {};
		\node [style=none] (93) at (0.9, 0.375) {};
		\node [style=none] (94) at (0.9, -0.375) {};
		\node [style=none] (95) at (-1.3, 1.275) {};
		\node [style=none] (96) at (0.35, 1.275) {};
		\node [style=none] (97) at (0.35, 0.525) {};
		\node [style=none] (98) at (-1.3, 0.525) {};
		\node [style=none] (99) at (-0.35, -0.525) {};
		\node [style=none] (100) at (-0.35, -1.275) {};
		\node [style=none] (101) at (1.275, -1.275) {};
		\node [style=none] (102) at (1.275, -0.525) {};
		\node [style=none] (103) at (-0.5, -1.425) {};
		\node [style=none] (104) at (0.5, -1.425) {};
		\node [style=none] (105) at (0.5, -2.175) {};
		\node [style=none] (106) at (-0.5, -2.175) {};
		\node [style=none] (107) at (1.5, 0) {};
		\node [style=none] (108) at (2.825, -2.55) {(high $\alpha$)};
		\node [style=mwc] (109) at (2.425, 0) {$\texttt{P}_4$};
		\node [style=mwc] (110) at (3.225, 0) {$\texttt{P}_5$};
		\node [style=mwc] (111) at (1.925, 0.9) {$\texttt{P}_1$};
		\node [style=mwc] (112) at (2.825, 0.9) {$\texttt{P}_2$};
		\node [style=mwc] (113) at (3.725, 0.9) {$\texttt{P}_3$};
		\node [style=mbc] (114) at (2.825, 1.8) {$\texttt{B}_1$};
		\node [style=mwc] (115) at (1.925, -0.9) {$\texttt{P}_6$};
		\node [style=mwc] (116) at (2.825, -0.9) {$\texttt{P}_7$};
		\node [style=mwc] (117) at (3.725, -0.9) {$\texttt{P}_8$};
		\node [style=mwc] (118) at (2.825, -1.8) {$\texttt{B}_2$};
		\node [style=none] (131) at (2.325, -1.425) {};
		\node [style=none] (132) at (3.325, -1.425) {};
		\node [style=none] (133) at (3.325, -2.175) {};
		\node [style=none] (134) at (2.325, -2.175) {};
		\node [style=none] (135) at (4.225, 0) {};
		\node [style=none] (136) at (0.55, 1.275) {};
		\node [style=none] (137) at (1.3, 1.275) {};
		\node [style=none] (138) at (1.3, 0.525) {};
		\node [style=none] (139) at (0.55, 0.525) {};
		\node [style=none] (140) at (-1.3, -0.525) {};
		\node [style=none] (141) at (-0.55, -0.525) {};
		\node [style=none] (142) at (-0.55, -1.275) {};
		\node [style=none] (143) at (-1.3, -1.275) {};
		\node [style=none] (144) at (1.575, 1.275) {};
		\node [style=none] (145) at (2.325, 1.275) {};
		\node [style=none] (146) at (2.325, 0.525) {};
		\node [style=none] (147) at (1.575, 0.525) {};
		\node [style=none] (148) at (2.425, 1.275) {};
		\node [style=none] (149) at (3.175, 1.275) {};
		\node [style=none] (150) at (3.175, 0.525) {};
		\node [style=none] (151) at (2.425, 0.525) {};
		\node [style=none] (152) at (3.375, 1.275) {};
		\node [style=none] (153) at (4.125, 1.275) {};
		\node [style=none] (154) at (4.125, 0.525) {};
		\node [style=none] (155) at (3.375, 0.525) {};
		\node [style=none] (156) at (2.975, 0.375) {};
		\node [style=none] (157) at (3.625, 0.375) {};
		\node [style=none] (158) at (3.625, -0.375) {};
		\node [style=none] (159) at (2.875, -0.375) {};
		\node [style=none] (160) at (2.025, 0.375) {};
		\node [style=none] (161) at (2.775, 0.375) {};
		\node [style=none] (162) at (2.775, -0.375) {};
		\node [style=none] (163) at (2.025, -0.375) {};
		\node [style=none] (164) at (3.375, -0.525) {};
		\node [style=none] (165) at (4.125, -0.525) {};
		\node [style=none] (166) at (4.125, -1.275) {};
		\node [style=none] (167) at (3.375, -1.275) {};
		\node [style=none] (168) at (2.45, -0.525) {};
		\node [style=none] (169) at (3.2, -0.525) {};
		\node [style=none] (170) at (3.2, -1.275) {};
		\node [style=none] (171) at (2.45, -1.275) {};
		\node [style=none] (172) at (1.525, -0.525) {};
		\node [style=none] (173) at (2.275, -0.525) {};
		\node [style=none] (174) at (2.275, -1.275) {};
		\node [style=none] (175) at (1.525, -1.275) {};
		\node [style=none] (176) at (-3.425, 2.2) {};
		\node [style=none] (177) at (-2.425, 2.2) {};
		\node [style=none] (178) at (-2.425, 1.45) {};
		\node [style=none] (179) at (-3.425, 1.45) {};
		\node [style=none] (180) at (-0.475, 2.2) {};
		\node [style=none] (181) at (0.525, 2.2) {};
		\node [style=none] (182) at (0.525, 1.45) {};
		\node [style=none] (183) at (-0.475, 1.45) {};
		\node [style=none] (184) at (2.325, 2.2) {};
		\node [style=none] (185) at (3.325, 2.2) {};
		\node [style=none] (186) at (3.325, 1.45) {};
		\node [style=none] (187) at (2.325, 1.45) {};
	\end{pgfonlayer}
	\begin{pgfonlayer}{edgelayer}
		\draw (6) to (0);
		\draw (0) to (7);
		\draw (0) to (8);
		\draw (1) to (6);
		\draw (1) to (7);
		\draw (1) to (8);
		\draw (0) to (3);
		\draw (0) to (2);
		\draw (1) to (3);
		\draw (1) to (4);
		\draw [style=undirected red] (5) to (4);
		\draw [style=undirected red] (5) to (3);
		\draw [style=undirected red] (5) to (2);
		\draw [style=undirected red] (6) to (9);
		\draw [style=undirected red, in=90, out=-90] (7) to (9);
		\draw [style=undirected red] (9) to (8);
		\draw [style=undirected dashed] (30.center) to (31.center);
		\draw [style=undirected dashed] (31.center) to (32.center);
		\draw [style=undirected dashed] (32.center) to (33.center);
		\draw [style=undirected dashed] (33.center) to (30.center);
		\draw [style=undirected dashed] (24.center) to (27.center);
		\draw [style=undirected dashed] (27.center) to (29.center);
		\draw [style=undirected dashed] (29.center) to (28.center);
		\draw [style=undirected dashed] (28.center) to (24.center);
		\draw [style=undirected dashed] (34.center) to (35.center);
		\draw [style=undirected dashed] (35.center) to (36.center);
		\draw [style=undirected dashed] (34.center) to (37.center);
		\draw [style=undirected dashed] (37.center) to (36.center);
		\draw [style=undirected dashed] (38.center) to (41.center);
		\draw [style=undirected dashed] (41.center) to (40.center);
		\draw [style=undirected dashed] (40.center) to (39.center);
		\draw [style=undirected dashed] (39.center) to (38.center);
		\draw (2) to (3);
		\draw (7) to (8);
		\draw (87) to (81);
		\draw (81) to (88);
		\draw (81) to (89);
		\draw (82) to (87);
		\draw (82) to (88);
		\draw (82) to (89);
		\draw (81) to (84);
		\draw (81) to (83);
		\draw (82) to (84);
		\draw (82) to (85);
		\draw [style=undirected red] (86) to (85);
		\draw [style=undirected red] (86) to (84);
		\draw [style=undirected red] (86) to (83);
		\draw [style=undirected red] (87) to (90);
		\draw [style=undirected red, in=90, out=-90] (88) to (90);
		\draw [style=undirected red] (90) to (89);
		\draw [style=undirected dashed] (95.center) to (96.center);
		\draw [style=undirected dashed] (96.center) to (97.center);
		\draw [style=undirected dashed] (97.center) to (98.center);
		\draw [style=undirected dashed] (98.center) to (95.center);
		\draw [style=undirected dashed] (91.center) to (92.center);
		\draw [style=undirected dashed] (92.center) to (94.center);
		\draw [style=undirected dashed] (94.center) to (93.center);
		\draw [style=undirected dashed] (93.center) to (91.center);
		\draw [style=undirected dashed] (99.center) to (100.center);
		\draw [style=undirected dashed] (100.center) to (101.center);
		\draw [style=undirected dashed] (99.center) to (102.center);
		\draw [style=undirected dashed] (102.center) to (101.center);
		\draw [style=undirected dashed] (103.center) to (106.center);
		\draw [style=undirected dashed] (106.center) to (105.center);
		\draw [style=undirected dashed] (105.center) to (104.center);
		\draw [style=undirected dashed] (104.center) to (103.center);
		\draw (83) to (84);
		\draw (88) to (89);
		\draw (115) to (109);
		\draw (109) to (116);
		\draw [style=undirected] (109) to (117);
		\draw (110) to (115);
		\draw (110) to (116);
		\draw (110) to (117);
		\draw (109) to (112);
		\draw (109) to (111);
		\draw (110) to (112);
		\draw (110) to (113);
		\draw [style=undirected red] (114) to (113);
		\draw [style=undirected red] (114) to (112);
		\draw [style=undirected red] (114) to (111);
		\draw [style=undirected red] (115) to (118);
		\draw [style=undirected red, in=90, out=-90] (116) to (118);
		\draw [style=undirected red] (118) to (117);
		\draw [style=undirected dashed] (131.center) to (134.center);
		\draw [style=undirected dashed] (134.center) to (133.center);
		\draw [style=undirected dashed] (133.center) to (132.center);
		\draw [style=undirected dashed] (132.center) to (131.center);
		\draw (111) to (112);
		\draw (116) to (117);
		\draw [style=undirected dashed] (136.center) to (137.center);
		\draw [style=undirected dashed] (137.center) to (138.center);
		\draw [style=undirected dashed] (138.center) to (139.center);
		\draw [style=undirected dashed] (139.center) to (136.center);
		\draw [style=undirected dashed] (140.center) to (141.center);
		\draw [style=undirected dashed] (141.center) to (142.center);
		\draw [style=undirected dashed] (142.center) to (143.center);
		\draw [style=undirected dashed] (143.center) to (140.center);
		\draw [style=undirected dashed] (144.center) to (145.center);
		\draw [style=undirected dashed] (145.center) to (146.center);
		\draw [style=undirected dashed] (146.center) to (147.center);
		\draw [style=undirected dashed] (147.center) to (144.center);
		\draw [style=undirected dashed] (148.center) to (149.center);
		\draw [style=undirected dashed] (149.center) to (150.center);
		\draw [style=undirected dashed] (150.center) to (151.center);
		\draw [style=undirected dashed] (151.center) to (148.center);
		\draw [style=undirected dashed] (152.center) to (153.center);
		\draw [style=undirected dashed] (153.center) to (154.center);
		\draw [style=undirected dashed] (154.center) to (155.center);
		\draw [style=undirected dashed] (155.center) to (152.center);
		\draw [style=undirected dashed] (156.center) to (157.center);
		\draw [style=undirected dashed] (157.center) to (158.center);
		\draw [style=undirected dashed] (158.center) to (159.center);
		\draw [style=undirected dashed] (159.center) to (156.center);
		\draw [style=undirected dashed] (160.center) to (161.center);
		\draw [style=undirected dashed] (161.center) to (162.center);
		\draw [style=undirected dashed] (162.center) to (163.center);
		\draw [style=undirected dashed] (163.center) to (160.center);
		\draw [style=undirected dashed] (164.center) to (165.center);
		\draw [style=undirected dashed] (165.center) to (166.center);
		\draw [style=undirected dashed] (166.center) to (167.center);
		\draw [style=undirected dashed] (167.center) to (164.center);
		\draw [style=undirected dashed] (168.center) to (169.center);
		\draw [style=undirected dashed] (169.center) to (170.center);
		\draw [style=undirected dashed] (170.center) to (171.center);
		\draw [style=undirected dashed] (171.center) to (168.center);
		\draw [style=undirected dashed] (172.center) to (173.center);
		\draw [style=undirected dashed] (173.center) to (174.center);
		\draw [style=undirected dashed] (174.center) to (175.center);
		\draw [style=undirected dashed] (175.center) to (172.center);
		\draw [style=undirected dashed] (176.center) to (179.center);
		\draw [style=undirected dashed] (179.center) to (178.center);
		\draw [style=undirected dashed] (178.center) to (177.center);
		\draw [style=undirected dashed] (177.center) to (176.center);
		\draw [style=undirected dashed] (180.center) to (183.center);
		\draw [style=undirected dashed] (183.center) to (182.center);
		\draw [style=undirected dashed] (182.center) to (181.center);
		\draw [style=undirected dashed] (181.center) to (180.center);
		\draw [style=undirected dashed] (184.center) to (187.center);
		\draw [style=undirected dashed] (187.center) to (186.center);
		\draw [style=undirected dashed] (186.center) to (185.center);
		\draw [style=undirected dashed] (185.center) to (184.center);
	\end{pgfonlayer}
\end{tikzpicture}
    \caption{Influence of $\alpha$: As $\alpha$ increases, the criterion for clustering nodes by path similarity becomes stricter. The source node, $\texttt{B}_1$, is shaded in grey.
Nodes that were previously clustered become partitioned. 
For example: $\{P_1, P_2, P_3\} \xrightarrow{} \{P_1, P_2\}\{P_3\} 
\xrightarrow{} \{P_1\} \{P_2\} \{P_3\}$.}
    \label{fig:alpha}
\end{figure}

\textbf{Motivation} Theorem~\ref{theorem:optimality} allows us to run the minimum number of walks needed to compute good enough estimates of the truncated hitting times and the path distributions. Based on these estimates, the next step is to partition our data into path-symmetric sets. If we choose very strict standards for clustering nodes together, i.e. only clustering if they have identical truncated hitting times and path distributions, the formulae we find will be very stringent and will not generalize well on unseen data (overfitting). However, if we are too loose on the criteria to merge nodes, the rules we obtain will be too soft and not informative enough (underfitting). 

Our approach to controlling rule softness is to introduce statistical tests to decide when two nodes are distance- and path-symmetric and a user-specified parameter $0 < \alpha < 1$. $\alpha$ is the statistical significance level at which two nodes are considered distance- and path-symmetric. $\alpha$, therefore, measures how lenient we are in merging entities into abstract concepts and by extension an indirect measure of the softness of rules. The effect of changing $\alpha$ for path-symmetry clustering on an example hypergraph is shown in Fig \ref{fig:alpha}.

This approach results in three major benefits:

(i) $\alpha$ is the only clustering parameter compared to the four parameters in SOTA. (ii) $\alpha$ is by construction dataset independent, thus simplifying hyperparameter tuning compared to SOTA. (iii) the $\alpha$ parameter has a direct and intuitive effect on the size of the path-symmetric clusters, with smaller $\alpha$ leading to less-strict statistical tests that ultimately favour finding fewer, but larger path-symmetric sets and thus fewer, more-approximate abstract concepts.

\textbf{Usage} Given truncated hitting times $\hat{h}^{L,N}_{i,j}$, we merge nodes if the difference between their THTs is below a threshold $\theta_{sym}(\alpha)$. Next, given path distributions $\hat{P}_{i,j}^{L,N}$, we propose a hypothesis test to validate whether a set of sampled distributions are statistically similar. We show that both tests can be performed to a specified level of statistical significance given by just one parameter: $\alpha$.

First, we consider the null hypothesis that nodes $v_j$ and $v_k$ are order-$L$ distance-symmetric w.r.t. $v_i$, using  ${\vert \hat{h}^{L,N}_{i,j} - \hat{h}^{L,N}_{i,k} \vert}$ as a test statistic:

\begin{theorem}[Distance-Symmetric Hypothesis Test]
\label{theta_sym_hypothesis}
The null hypothesis is rejected at significance level $\alpha$, i.e. nodes $v_j$ and $v_k$ are not order-$L$ distance-symmetric, if 
$\vert \hat{h}^{L,N}_{i,j} - \hat{h}^{L,N}_{i,k} \vert > (({L-1})/{\sqrt{2N}}) t_{\alpha/2, N-1}$,
where $t_{\alpha/2, N-1}$ is the inverse-survival function of an $N-1$ degrees of freedom student-t distribution evaluated at $\alpha/2$. 
\end{theorem}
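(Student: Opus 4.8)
The plan is to recognise the estimated truncated hitting time as an empirical mean of i.i.d.\ bounded samples and then run a standard, deliberately conservative, two-sided $t$-test on the difference of two such means. Concretely, I would first write $\hat{h}^{L,N}_{i,j} = \frac{1}{N}\sum_{m=1}^{N} T^{(j)}_m$, where $T^{(j)}_m \in \{1,\dots,L\}$ is the capped first-hitting time of $v_j$ along the $m$-th random walk started at $v_i$ (set to $L$ when $v_j$ is not reached within $L$ steps), and analogously for $\hat{h}^{L,N}_{i,k}$. By the definition of the THT the per-walk capped hitting time has expectation $h^{L}_{i,j}$, so each estimate is an unbiased mean of $N$ i.i.d.\ samples. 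Under the null hypothesis of order-$L$ distance-symmetry, $h^L_{i,j}=h^L_{i,k}$, the difference statistic $D := \hat{h}^{L,N}_{i,j} - \hat{h}^{L,N}_{i,k}$ then satisfies $\E[D]=0$.

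Next I would control the standard error of $D$ in a dataset-independent way. Since each $T^{(\cdot)}_m$ is supported on an interval of length $L-1$, Popoviciu's inequality gives $\Var(T^{(\cdot)}_m) \le (L-1)^2/4$, hence $\Var(\hat{h}^{L,N}_{i,j}),\,\Var(\hat{h}^{L,N}_{i,k}) \le (L-1)^2/(4N)$. Combining these and discarding the covariance term---which I would argue is non-negative, because two capped hitting times measured along the \emph{same} walk tend to grow together, so dropping it only inflates the bound---yields $\Var(D) \le (L-1)^2/(2N)$, i.e.\ a worst-case standard error of $(L-1)/\sqrt{2N}$. This is precisely the prefactor appearing in the claimed threshold, which is what pins down the factor $\sqrt{2}$.

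Finally I would invoke the central limit theorem so that, for the sample sizes $N$ prescribed by Theorem~\ref{theorem:optimality}, $D$ is approximately Gaussian with mean $0$; standardising by the (estimated) standard error turns $D$ into a statistic that is approximately student-$t$ distributed with $N-1$ degrees of freedom. A two-sided test at significance level $\alpha$ rejects when the magnitude of this standardised statistic exceeds the critical value $t_{\alpha/2,N-1}$; substituting the worst-case standard error $(L-1)/\sqrt{2N}$ and rearranging gives exactly the stated rejection region $\vert \hat{h}^{L,N}_{i,j} - \hat{h}^{L,N}_{i,k}\vert > \big((L-1)/\sqrt{2N}\big)\, t_{\alpha/2,N-1}$.

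The main obstacle is not the algebra but the statistical modelling choices that make the threshold simultaneously valid and conservative. I expect the delicate points to be: (i) justifying the substitution of the Popoviciu worst-case variance for the unknown true (or sample) variance, which is what renders the threshold dataset-independent at the price of conservativeness; (ii) handling the dependence between $\hat{h}^{L,N}_{i,j}$ and $\hat{h}^{L,N}_{i,k}$, which are computed from the same batch of walks, by sign-controlling their covariance rather than assuming independence; and (iii) reconciling the asymptotic normality delivered by the CLT with the exact student-$t$ quantile and the choice of $N-1$ degrees of freedom, which is the natural count when each mean is formed from $N$ samples with the variance treated as estimated.
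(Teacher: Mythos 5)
Your proposal follows the same skeleton as the paper's own proof: write each $\hat{h}^{L,N}_{i,\cdot}$ as a sample mean of i.i.d.\ capped hitting times, bound the per-walk standard deviation by $(L-1)/2$ via Popoviciu's inequality, and run a two-sided $t$-test with $N-1$ degrees of freedom in which the unknown standard error is replaced by its worst-case value $(L-1)/\sqrt{2N}$. Your steps (i) and (iii) are exactly what the paper does: it forms the two-sample statistic $t = (\hat{h}^{L,N}_{i,j}-\hat{h}^{L,N}_{i,k})/(\hat{\sigma}/\sqrt{N})$ with $\hat{\sigma} = \sqrt{\hat{\sigma}_j^2+\hat{\sigma}_k^2}$, applies Popoviciu to get $\hat{\sigma} \le (L-1)/\sqrt{2}$, and rearranges.

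The step that fails is your covariance argument (ii). You claim $\Cov\bigl(\hat{h}^{L,N}_{i,j}, \hat{h}^{L,N}_{i,k}\bigr) \ge 0$ because capped hitting times along the same walk ``tend to grow together.'' In general the opposite happens: a single walker can only move toward one target at a time, so reaching $v_j$ early typically forces $v_k$ to be reached late. Concretely, let $v_i$ be the centre of a star with two leaves $v_j, v_k$ and take $L=2$: each walk yields $(T_j,T_k)=(1,2)$ or $(2,1)$ with probability $1/2$, so the capped hitting times are perfectly anti-correlated, and the per-walk difference has variance $\Var(T_j)+\Var(T_k)-2\Cov(T_j,T_k) = 1$, which is double your claimed bound of $(L-1)^2/2 = 1/2$. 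Note that in the application the nodes being compared are precisely such ``siblings'' relative to the source (e.g.\ two students w.r.t.\ a teacher), so anti-correlation is the typical regime, not a pathology. Without sign information, Cauchy--Schwarz only gives $\Var(D) \le (L-1)^2/N$, and the factor $\sqrt{2}$ in the threshold is lost; equivalently, an honest paired test on $D_m = T^{(j)}_m - T^{(k)}_m$ yields only the weaker threshold $\bigl((L-1)/\sqrt{N}\bigr)\, t_{\alpha/2,N-1}$. The paper does not attempt your repair: its proof simply invokes the standard two-sample $t$-test, which presupposes that the two samples are independent---the same idealization you were trying to avoid, but at least not resting on a false sign claim. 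So your instinct that the dependence between the two estimates is the delicate point is correct; your resolution of it is the one step of the proposal that is genuinely wrong, and as stated the claimed threshold cannot be derived from it.
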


Theorem~\ref{theta_sym_hypothesis} allows us to set parameter $\sym$ dynamically for each pair of nodes whose hitting times are being compared, such that nodes are merged only if they are distance-symmetric at significance level $\alpha$:
\begin{equation}
    \sym = \frac{L-1}{\sqrt{2N}}t_{\alpha/2, N-1}. \label{eq:theta_sym_hypothesis}
\end{equation}

To measure the degree of \textit{path} symmetry (as opposed to \textit{distance} symmetry), a threshold can be set using a different hypothesis test but based on the same $\alpha$ used to set $\sym$ above. In the next section, we detail this hypothesis test.

\section{Efficient Structural Motif Finding}\label{section:ours:path-symmetry}

Above we have discussed how to set our parameters in a principled fashion. In this section, we discuss how to use these parameters in an efficient algorithm (Sec. \ref{subsec:pathsymmetryclustering}), and then further improve speed by reducing the required length (Sec. \ref{section:h-clustering}) and number of random walks  (Sec. \ref{subsec:runningfewerwalks}).
\subsection{An Improved Path-Symmetry Clustering Algorithm}
\label{subsec:pathsymmetryclustering}

In this section, we outline an efficient algorithm, which we refer to as $\ourspaths$, for partitioning nodes into sets that are path-symmetric at significance level $\alpha$. This algorithm has $\mathcal{O}(n\ln{n})$ complexity in the number of nodes to cluster, which offers a significant improvement over the $\mathcal{O}(n^3)$ complexity of SOTA.

Using the notation introduced in Section~\ref{section:ours:parameters}, we partition each distance-symmetric node set $A_m \in \{A_1,\dots, A_M\}$ into 
path-symmetric sets w.r.t. a node $v_i$. $\ourspaths$ treats the path counts of the nodes within each $A_m$ as points in a multi-dimensional space of the path signatures. For each $A_m$, $\ourspaths$ then clusters nodes into path-symmetric sets as follows: First, we run a hypothesis test (Thm. \ref{hypothesis_test}, discussed below) on $A_m$ to check whether the entire set of nodes is path-symmetric at significance level $\alpha$. If the test passes, all nodes are clustered together. If the test fails, we proceed with recursive clustering (see Alg. \ref{k_means_clustering}):
\begin{enumerate}[leftmargin=*]
    \item Standardize (zero mean, unit variance) the path counts and use PCA to map these standardized counts to a two-dimensional space.
    \item Cluster nodes in the reduced space into two sets using unsupervised BIRCH clustering \cite{zhang_birch_1996} .
    \item Perform a path-symmetry hypothesis test (Thm. \ref{hypothesis_test}) separately on the two identified sets.
    \item Clusters failing the test have their nodes repeatedly repartitioned into two new clusters using steps 2 and 3 until all sets of clusters pass the hypothesis test. The output is a set of clusters $\{B_1, B_2, ..., B_k\}$ partitioning the set $A_m$.
\end{enumerate}

\begin{algorithm}[t] 
    \caption{$\ourspaths$} \label{k_means_clustering}
    \textbf{Input:}{$A$, nodes to partition into order-$L$ path-symmetric sets w.r.t. $v_i$, where $v_i \not \in A$}\\
    \textbf{Output:}{$B_1,\dots,B_K$, path-symmetric sets}\\
    \textbf{Parameters:}{$\alpha$, $\ourdim=2$ (parameters $N,L$ are implicit)} 

    \uIf{$A$ is path symmetric at significance level $\alpha$ for each  
    $l \in \{L, L-1, \dots, 1\}$} 
        {\Return{$\left\{A\right\}$} \tcp*{Thm.~\ref{hypothesis_test}} 
    } 
    \Else{
        \textbf{for each} $v_{\ell'} \in A$
        \textbf{compute} \& \textbf{standardise} $\hat{C}^{L,N}_{i,\ell'}$ 
        
        \textbf{reduce} all the $\hat{C}^{l,N}_{i,\ell'}$'s into $\ourdim$-dimensional feature vectors using PCA 
        
        $Partition \xleftarrow{} \emptyset$
        
        $RemainingSets \xleftarrow{} \{A\}$
        
        \While{ $RemainingSets$ not \textbf{empty}} {
        
            $S \xleftarrow{} RemainingSets$.pop
            
            \textbf{partition} $S$ into $\{B_1, B_2\}$ via unsupervised clustering of the $\hat{C}^{l,N}_{i,\ell'}$'s  
            
            \For{$B_i \in \{B_1, B_2\}$} {
            
            \uIf{$B_i$ is path symmetric at significance level $\alpha$ for each  
    $l \in \{L, L-1, \dots, 1\}$}{
    $Parition$.append($B_i$)}

            \Else{
            $RemainingSets$.append($B_i$)
            }
        }
        }
        
        \Return{$Partition$} 
    }
\end{algorithm}

Similar to Section~\ref{section:ours:significance}, the partitioning process in $\ourspaths$ is driven by a statistical hypothesis test. This time, the desired null hypothesis is that \textit{all the nodes} in a cluster $B_k$ are order-$L$ path-symmetric. Specifically, we test for each cluster the following: that for each $l \in \{L, L-1, \dots, 1\}$ the cluster $B_k$ is \textit{exact} order-$l$ path-symmetric w.r.t. $v_i$ at significance level $\alpha$. We denote this null hypothesis $H_0$.

If $H_0$ is true, then at significance level $\alpha$ there exists a multinomial distribution, common to all nodes in $B_k$, from which the empirical exact path counts $\hat{C}_{i,j}^{L,N}|_l$ are drawn.
Extending a version of the $\chi^2$ test, we show the following:

\begin{theorem}[Path-Symmetric Hypothesis Test]
\label{hypothesis_test}
Let $\Lambda_l$ be the total number of different paths of length $l$ over all $\mathcal{S}^L_{i,j}$'s. The null hypothesis that the nodes $B_k$ are order-$l$ exact path symmetric is rejected at significance level $\alpha$ if the statistic 
\begin{align}
    Q(B_k)  \vcentcolon= \sum_{\lambda=0}^{\Lambda_l}\sum_{v_{j}\in B_k} \left(c_\lambda - c^{(j)}_\lambda\right)^2
\end{align}
exceeds $\chi_{\boldsymbol{w},\boldsymbol{\nu}}^2(\alpha)$, where
\begin{align}
    c_\lambda^{(j)} \vcentcolon= \hat{C}_{i,j}^{L,N}(\lambda), \quad
    c^{(j)}_0 \vcentcolon= N - \sum_{\lambda=1}^{\Lambda_l} c^{(j)}_{\lambda},\\ \quad
    c_\lambda \vcentcolon = \frac{1}{\vert B_k \vert}\sum_{v_{j} \in B_k} c_\lambda^{(j)},    \nonumber
\end{align}
and $\chi^2_{\boldsymbol{w}, \boldsymbol{\nu}}(\alpha)$ is a generalised chi-squared distribution, weight parameters $\boldsymbol{w}$ and degree of freedom parameters $\boldsymbol{\nu} = \left(1, 1, ..., 1\right)$, evaluated at significance level $\alpha$. Above $\boldsymbol{w}$ are the eigenvalues of the block matrix $\boldsymbol{\Tilde{\Sigma}}$ with components
\begin{align*}
    &\Tilde{\Sigma}^{(b,b')}_{\lambda,\lambda'} = N\left(\delta_{b,b'} - \frac{1}{\vert B_k \vert} \right) \cdot\\ &\Big(\delta_{\lambda,\lambda'}\frac{c_\lambda}{N}\left(1-\frac{c_\lambda}{N}\right) - \big(1-\delta_{\lambda,\lambda'}\big)\frac{c_\lambda}{N}\frac{c_\lambda'}{N}\Big) \;,
\end{align*}
where $b,b' \in \left(1, \dots, \vert B_k \vert \right)$ index blocks, $\lambda,\lambda' \in \left(0, \dots, \Lambda_l \right)$ index within blocks and $\delta_{\lambda,\lambda'}$ is the Kronecker delta.
\end{theorem}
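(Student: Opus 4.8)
The plan is to model the per-node exact path counts under $H_0$ as multinomial vectors, rewrite $Q(B_k)$ as the squared norm of the vector of their deviations from the pooled mean, use the multivariate central limit theorem to make that vector asymptotically Gaussian with covariance $\boldsymbol{\tilde{\Sigma}}$, and finally identify the law of a centred Gaussian quadratic form as a generalised chi-squared whose weights are the eigenvalues of $\boldsymbol{\tilde{\Sigma}}$.

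Fix the length $l$. Under $H_0$ there is a common probability vector $\boldsymbol{p}=(p_0,p_1,\dots,p_{\Lambda_l})$ over the $\Lambda_l$ signatures of length $l$ together with a residual category $0$ collecting every walk that does not reach $v_j$ by a length-$l$ path. Each of the $N$ walks from $v_i$ independently lands in one category, so for every $v_j\in B_k$ the count vector $\boldsymbol{c}^{(j)}=(c^{(j)}_0,\dots,c^{(j)}_{\Lambda_l})$ is $\Multinomial(N,\boldsymbol{p})$, with mean $N\boldsymbol{p}$ and the singular multinomial covariance $\Sigma_0$ whose entries are $Np_\lambda(1-p_\lambda)$ on the diagonal and $-Np_\lambda p_{\lambda'}$ off it. I would treat the $|B_k|$ vectors as independent across nodes; this is the modelling assumption that gives $\boldsymbol{\tilde{\Sigma}}$ its block form, and it is the step I would scrutinise most, since the same walk ensemble feeds every target.

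Next I would express $Q$ as a quadratic form. Because $c_\lambda-c^{(j)}_\lambda$ is node $j$'s deviation from the across-node average, stacking all deviations into one vector $\boldsymbol{d}$ of length $|B_k|(\Lambda_l+1)$ gives $\boldsymbol{d}=(C\otimes I)\boldsymbol{c}$, where $\boldsymbol{c}$ stacks the $\boldsymbol{c}^{(j)}$ and $C=I_{|B_k|}-\tfrac{1}{|B_k|}J$ is the centring matrix across blocks, so that $Q(B_k)=\boldsymbol{d}^\top\boldsymbol{d}$. Since $C$ is an orthogonal projection, $\Cov(\boldsymbol{d})=(C\otimes I)(I\otimes\Sigma_0)(C\otimes I)^\top=C\otimes\Sigma_0$, whose $\big((b,\lambda),(b',\lambda')\big)$ entry is exactly $(\delta_{b,b'}-1/|B_k|)\,\Sigma_{0,\lambda\lambda'}$; replacing the unknown $p_\lambda$ by the pooled estimates $\hat{p}_\lambda=c_\lambda/N$ turns this into the stated $\boldsymbol{\tilde{\Sigma}}$. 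Note $\E[\boldsymbol{d}]=\boldsymbol{0}$ under $H_0$, as all blocks share the mean $N\boldsymbol{p}$.

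Finally, by the multivariate CLT each $\boldsymbol{c}^{(j)}$ is asymptotically Gaussian for large $N$, so $\boldsymbol{d}$ is asymptotically $\N(\boldsymbol{0},\boldsymbol{\tilde{\Sigma}})$, the plug-in covariance being justified by consistency of $\hat{p}_\lambda$ and Slutsky's theorem. For a centred Gaussian $\boldsymbol{d}\sim\N(\boldsymbol{0},\boldsymbol{\tilde{\Sigma}})$, diagonalising $\boldsymbol{\tilde{\Sigma}}=U\,\mathrm{diag}(\boldsymbol{w})\,U^\top$ and writing $\boldsymbol{d}=\boldsymbol{\tilde{\Sigma}}^{1/2}\boldsymbol{z}$ with $\boldsymbol{z}\sim\N(\boldsymbol{0},I)$ yields $Q=\boldsymbol{d}^\top\boldsymbol{d}=\sum_i w_i y_i^2$ with $\boldsymbol{y}=U^\top\boldsymbol{z}\sim\N(\boldsymbol{0},I)$, i.e. a weighted sum of independent $\chi^2_1$ variables, which is by definition the generalised chi-squared $\chi^2_{\boldsymbol{w},\boldsymbol{\nu}}$ with $\boldsymbol{\nu}=(1,\dots,1)$ and weights the eigenvalues of $\boldsymbol{\tilde{\Sigma}}$. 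Rejecting when $Q$ exceeds the upper-$\alpha$ quantile $\chi^2_{\boldsymbol{w},\boldsymbol{\nu}}(\alpha)$ then gives a level-$\alpha$ test. The main obstacle, besides the block-independence assumption above, is that $\boldsymbol{\tilde{\Sigma}}$ is singular: the sum-to-$N$ constraint removes one dimension per block and the centring removes the mean direction. Hence I must keep $Q$ in its unnormalised sum-of-squares form (the middle matrix being the identity) rather than the usual expected-count normalisation; this is precisely what forces a generalised chi-squared, and the zero eigenvalues of $\boldsymbol{\tilde{\Sigma}}$ simply drop out of the weighted sum.
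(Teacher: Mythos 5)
Your proposal is correct and takes essentially the same route as the paper's own proof: multinomial path counts under $H_0$, a large-$N$ Gaussian approximation, the centred deviation vector with block covariance $\boldsymbol{\tilde{\Sigma}}$, identification of the resulting Gaussian quadratic form as a generalised chi-squared whose weights are the eigenvalues of $\boldsymbol{\tilde{\Sigma}}$, and finally plugging in $c_\lambda/N$ for the unknown probabilities. Your Kronecker-product/centring-matrix computation $\Cov(\boldsymbol{d}) = C \otimes \Sigma_0$ is just a tidier bookkeeping of the paper's component-wise covariance derivation, and your side remarks on the singularity of $\boldsymbol{\tilde{\Sigma}}$ and on the cross-node independence assumption (which the paper also makes, via its iid statement) are sound but do not alter the argument.
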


\begin{remark}
By requiring knowledge of the eigenvalues $\boldsymbol{w}$, Theorem \ref{hypothesis_test} suggests that an eigendecomposition of $\boldsymbol{\Tilde{\Sigma}}$ is necessary. However, we show in the Appendix how this calculation can be avoided by approximating the generalised chi-squared distribution by a gamma distribution.
\end{remark}

\subsection{Running Shorter Random Walks}
\label{section:h-clustering}
We now show how to set a minimal $L$ needed for the random walks to span areas of interest in the hypergraph. We do this by hierarchical clustering. 

\textbf{Motivation}  We implement hierarchical clustering by iteratively cutting the hypergraph along sparse cuts until no sparse cuts remain. This algorithm results in three benefits:
(i) Splitting the hypergraph into smaller sub-graphs leads to smaller diameters and therefore smaller $L$. This, by extension, also reduces $N$ (Thm. \ref{theorem:optimality}).
(ii) Having fewer nearby nodes means that the subsequent partitioning in $\ourspaths$ is faster.
(iii) Hierarchical clustering identifies groups of densely-connected nodes, which helps us to ignore spurious links. Spuriously-connected nodes appear rarely in the path signatures and therefore only add noise to the path signature counts. By focusing random walks on areas of interest, we are hitting nodes that are densely connected more often and gaining more accurate statistics of truncated hitting times and empirical path distributions.

\textbf{Hierarchical Clustering Algorithm} The algorithm $\hcluster$ is based on spectral clustering, a standard approach for cutting graphs along sparse cuts. A discussion of spectral clustering is beyond the scope of this paper. Note that the is no equivalent approach for hypergraphs, so we propose to translate a hypergraph into a graph and then perform spectral clustering as follows:

In overview, $\hcluster$ begins by converting a hypergraph $\mathcal{H}=(V,E)$ into a weighted graph $\mathcal{G}$ by expanding cliques over each hyperedge. Next, $\mathcal{G}$ is recursively bipartitioned using the sweep set approximation algorithm for the Cheeger-cut \cite{chang_cheegers_2017}. The result of the partitioning is a set of subgraphs $G\vcentcolon=\{\mathcal{G}_1, \mathcal{G}_2, ..., \mathcal{G}_k\}$. The partitioning terminates whenever the second-smallest eigenvalue of the symmetric Laplacian matrix $\lambda_2$ exceeds a threshold value $\lambda_2^{max}$.
$\lambda_2^{max}$ is dataset independent and thus fixed in our implementation.
Finally, each subgraph $\mathcal{G}_i$ is then converted into a hypergraph $\mathcal{H}_i = (V_i, E_i)$ such that the vertex set $V_i$ of the hypergraph is initialised to be the vertex set of $\mathcal{G}_i$. The edge set $E_i$ is then constructed by adding all hyperedges $e \in E$ whose strict majority of element vertices appear in $V_i$, i.e. $E_i \vcentcolon= \left\{e \in E \; \middle\vert \; \vert e \cap V_i \vert > \vert e\vert/2 \right\}$. As a consequence, no nodes nor edges are lost during clustering.
$\hcluster$ returns the set of sub-hypergraphs $\{\mathcal{H}_1, \mathcal{H}_2, ..., \mathcal{H}_k\}$. After partitioning, we run the rest of the pipeline with $L$ set to the diameter of each $\mathcal{H}_i$.

Our entire pipeline for learning abstract concepts from a relational database is summarised in Algorithm \ref{get_communities_improved}.

%%%%%%%%%%%%%% ALGORITHM
\begin{algorithm}[tb]
    \caption{$\oursgetcomm$}\label{get_communities_improved}
    \textbf{Input:}{$\mathcal{H}$, the hypergraph representation of the input relational database}\\
    \textbf{Output:}{Path-symmetric sets (abstract concepts $\mathcal{C}$) of nodes w.r.t. each $v_i$ in $\mathcal{H}$} \\
    \textbf{Parameters:}{$\varepsilon, \alpha$}
    
    $\mathcal{H}_1, \dots, \mathcal{H}_K := \hcluster(\mathcal{H})$ \tcp*{Sec. \ref{section:h-clustering}} 
    
    Let $V_k$ denote the set of nodes in $\mathcal{H}_k$
    
    \For{$1 \leq  k \leq K$}{
        
        \textbf{set} $L$ to the diameter of $\mathcal{H}_k$
        
        \textbf{compute} $\varepsilon$-optimal $N$ on $\mathcal{H}_k$ under $L$ \tcp*{Th.\ref{theorem:optimality}}
        
        \For{\textbf{each} node $v_i$ in $\mathcal{H}_k$}{
        
        \textbf{for each} $v_j \neq v_i$ in $\mathcal{H}_k$ compute $\hat{P}^{L,N}_{i,j}$ and $\hat{h}^{L,N}_{i,j}$ \tcp*{Sec. \ref{section:preliminaries}}

        \textbf{partition} $V_k$ into 
        distance-symmetric sets $\{A_1, A_2, ..., A_M\}$ using $\alpha$-significance \tcp*{Th. \ref{theta_sym_hypothesis}, Sec. \ref{section:preliminaries}}
    
        \For{ $1 \leq m \leq M$ }{
               $\mathcal{C}_m := \ourspaths(A_m, \alpha)$ \tcp*{Th.~\ref{hypothesis_test}, Sec. \ref{section:ours:path-symmetry}}
        
        }
        }
    }
    \Return{all $\mathcal{C}_m$'s}
\end{algorithm}

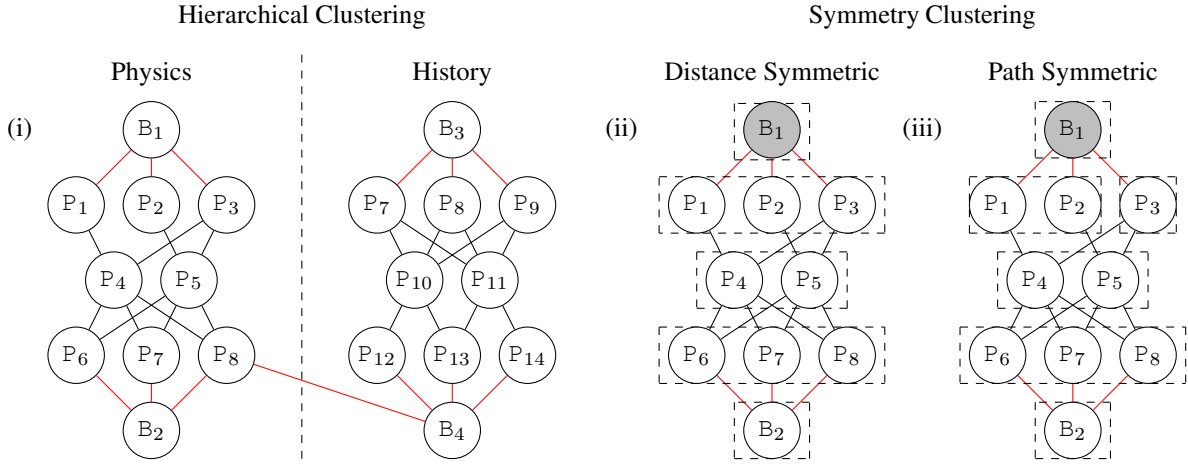
\begin{figure*}[ht]
\centering
\begin{tikzpicture}
	\begin{pgfonlayer}{nodelayer}
		\node [style=mwc] (0) at (1.5, 0) {$\texttt{P}_4$};
		\node [style=mwc] (1) at (2.5, 0) {$\texttt{P}_5$};
		\node [style=mwc] (2) at (1, 1) {$\texttt{P}_1$};
		\node [style=mwc] (3) at (2, 1) {$\texttt{P}_2$};
		\node [style=mwc] (4) at (3, 1) {$\texttt{P}_3$};
		\node [style=mbc] (5) at (2, 2) {$\texttt{B}_1$};
		\node [style=mwc] (6) at (1, -1) {$\texttt{P}_6$};
		\node [style=mwc] (7) at (2, -1) {$\texttt{P}_7$};
		\node [style=mwc] (8) at (3, -1) {$\texttt{P}_8$};
		\node [style=mwc] (9) at (2, -2) {$\texttt{B}_2$};
		\node [style=none] (23) at (2, 2.75) {Distance Symmetric};
		\node [style=none] (24) at (1, 0.375) {};
		\node [style=none] (27) at (1, -0.375) {};
		\node [style=none] (28) at (3, 0.375) {};
		\node [style=none] (29) at (3, -0.375) {};
		\node [style=none] (30) at (0.5, 1.375) {};
		\node [style=none] (31) at (3.5, 1.375) {};
		\node [style=none] (32) at (3.5, 0.625) {};
		\node [style=none] (33) at (0.5, 0.625) {};
		\node [style=none] (34) at (0.5, -0.625) {};
		\node [style=none] (35) at (0.5, -1.375) {};
		\node [style=none] (36) at (3.5, -1.375) {};
		\node [style=none] (37) at (3.5, -0.625) {};
		\node [style=none] (38) at (1.5, -1.625) {};
		\node [style=none] (39) at (2.5, -1.625) {};
		\node [style=none] (40) at (2.5, -2.375) {};
		\node [style=none] (41) at (1.5, -2.375) {};
		\node [style=mwc] (42) at (5.5, 0) {$\texttt{P}_4$};
		\node [style=mwc] (43) at (6.5, 0) {$\texttt{P}_5$};
		\node [style=mwc] (44) at (5, 1) {$\texttt{P}_1$};
		\node [style=mwc] (45) at (6, 1) {$\texttt{P}_2$};
		\node [style=mwc] (46) at (7, 1) {$\texttt{P}_3$};
		\node [style=mbc] (47) at (6, 2) {$\texttt{B}_1$};
		\node [style=mwc] (48) at (5, -1) {$\texttt{P}_6$};
		\node [style=mwc] (49) at (6, -1) {$\texttt{P}_7$};
		\node [style=mwc] (50) at (7, -1) {$\texttt{P}_8$};
		\node [style=mwc] (51) at (6, -2) {$\texttt{B}_2$};
		\node [style=none] (52) at (6, 2.75) {Path Symmetric};
		\node [style=none] (53) at (5, 0.375) {};
		\node [style=none] (54) at (5, -0.375) {};
		\node [style=none] (55) at (7, 0.375) {};
		\node [style=none] (56) at (7, -0.375) {};
		\node [style=none] (57) at (4.625, 1.375) {};
		\node [style=none] (58) at (7.375, 1.375) {};
		\node [style=none] (59) at (7.375, 0.625) {};
		\node [style=none] (60) at (4.625, 0.625) {};
		\node [style=none] (61) at (4.5, -0.625) {};
		\node [style=none] (62) at (4.5, -1.375) {};
		\node [style=none] (63) at (7.5, -1.375) {};
		\node [style=none] (64) at (7.5, -0.625) {};
		\node [style=none] (65) at (5.5, -1.625) {};
		\node [style=none] (66) at (6.5, -1.625) {};
		\node [style=none] (67) at (6.5, -2.375) {};
		\node [style=none] (68) at (5.5, -2.375) {};
		\node [style=none] (70) at (6.625, 1.375) {};
		\node [style=none] (72) at (6.625, 0.625) {};
		\node [style=none] (74) at (6.375, 1.375) {};
		\node [style=none] (76) at (6.375, 0.625) {};
		\node [style=none] (79) at (0, 2) {(ii)};
		\node [style=none] (80) at (4, 2) {(iii)};
		\node [style=mwc] (81) at (-6.75, 0) {$\texttt{P}_4$};
		\node [style=mwc] (82) at (-5.75, 0) {$\texttt{P}_5$};
		\node [style=mwc] (83) at (-7.25, 1) {$\texttt{P}_1$};
		\node [style=mwc] (84) at (-6.25, 1) {$\texttt{P}_2$};
		\node [style=mwc] (85) at (-5.25, 1) {$\texttt{P}_3$};
		\node [style=mwc] (86) at (-6.25, 2) {$\texttt{B}_1$};
		\node [style=mwc] (87) at (-7.25, -1) {$\texttt{P}_6$};
		\node [style=mwc] (88) at (-6.25, -1) {$\texttt{P}_7$};
		\node [style=mwc] (89) at (-5.25, -1) {$\texttt{P}_8$};
		\node [style=mwc] (90) at (-6.25, -2) {$\texttt{B}_2$};
		\node [style=mwc] (91) at (-3.25, -1) {$\texttt{P}_{12}$};
		\node [style=mwc] (92) at (-2.25, -1) {$\texttt{P}_{13}$};
		\node [style=mwc] (93) at (-1.25, -1) {$\texttt{P}_{14}$};
		\node [style=mwc] (94) at (-2.25, -2) {$\texttt{B}_4$};
		\node [style=mwc] (95) at (-2.75, 0) {$\texttt{P}_{10}$};
		\node [style=mwc] (96) at (-1.75, 0) {$\texttt{P}_{11}$};
		\node [style=mwc] (97) at (-3.25, 1) {$\texttt{P}_7$};
		\node [style=mwc] (98) at (-2.25, 1) {$\texttt{P}_8$};
		\node [style=mwc] (99) at (-1.25, 1) {$\texttt{P}_9$};
		\node [style=mwc] (100) at (-2.25, 2) {$\texttt{B}_3$};
		\node [style=none] (101) at (-4.25, 3) {};
		\node [style=none] (102) at (-4.25, -2.5) {};
		\node [style=none] (104) at (-6.25, 2.75) {Physics};
		\node [style=none] (105) at (-2.25, 2.75) {History};
		\node [style=none] (106) at (-8, 2) {(i)};
		\node [style=none] (107) at (-4.25, 3.5) {Hierarchical Clustering};
		\node [style=none] (108) at (5.5, 2.375) {};
		\node [style=none] (109) at (6.5, 2.375) {};
		\node [style=none] (110) at (6.5, 1.625) {};
		\node [style=none] (111) at (5.5, 1.625) {};
		\node [style=none] (112) at (1.5, 2.35) {};
		\node [style=none] (113) at (2.5, 2.35) {};
		\node [style=none] (114) at (2.5, 1.6) {};
		\node [style=none] (115) at (1.5, 1.6) {};
		\node [style=none] (116) at (4, 3.5) {Symmetry Clustering};
	\end{pgfonlayer}
	\begin{pgfonlayer}{edgelayer}
		\draw (6) to (0);
		\draw (0) to (7);
		\draw (0) to (8);
		\draw (1) to (6);
		\draw (1) to (7);
		\draw (1) to (8);
		\draw (0) to (2);
		\draw (1) to (3);
		\draw (1) to (4);
		\draw [style=undirected red] (5) to (4);
		\draw [style=undirected red] (5) to (3);
		\draw [style=undirected red] (5) to (2);
		\draw [style=undirected red] (6) to (9);
		\draw [style=undirected red, in=90, out=-90] (7) to (9);
		\draw [style=undirected red] (9) to (8);
		\draw [style=blackdashed] (30.center) to (31.center);
		\draw [style=blackdashed] (31.center) to (32.center);
		\draw [style=blackdashed] (32.center) to (33.center);
		\draw [style=blackdashed] (33.center) to (30.center);
		\draw [style=blackdashed] (24.center) to (27.center);
		\draw [style=blackdashed] (27.center) to (29.center);
		\draw [style=blackdashed] (29.center) to (28.center);
		\draw [style=blackdashed] (28.center) to (24.center);
		\draw [style=blackdashed] (34.center) to (35.center);
		\draw [style=blackdashed] (35.center) to (36.center);
		\draw [style=blackdashed] (34.center) to (37.center);
		\draw [style=blackdashed] (37.center) to (36.center);
		\draw [style=blackdashed] (38.center) to (41.center);
		\draw [style=blackdashed] (41.center) to (40.center);
		\draw [style=blackdashed] (40.center) to (39.center);
		\draw [style=blackdashed] (39.center) to (38.center);
		\draw (48) to (42);
		\draw (42) to (49);
		\draw (42) to (50);
		\draw (43) to (48);
		\draw (43) to (49);
		\draw (43) to (50);
		\draw (42) to (44);
		\draw (43) to (45);
		\draw (43) to (46);
		\draw [style=undirected red] (47) to (46);
		\draw [style=undirected red] (47) to (45);
		\draw [style=undirected red] (47) to (44);
		\draw [style=undirected red] (48) to (51);
		\draw [style=undirected red, in=90, out=-90] (49) to (51);
		\draw [style=undirected red] (51) to (50);
		\draw [style=blackdashed] (58.center) to (59.center);
		\draw [style=blackdashed] (60.center) to (57.center);
		\draw [style=blackdashed] (53.center) to (54.center);
		\draw [style=blackdashed] (54.center) to (56.center);
		\draw [style=blackdashed] (56.center) to (55.center);
		\draw [style=blackdashed] (55.center) to (53.center);
		\draw [style=blackdashed] (61.center) to (62.center);
		\draw [style=blackdashed] (62.center) to (63.center);
		\draw [style=blackdashed] (61.center) to (64.center);
		\draw [style=blackdashed] (64.center) to (63.center);
		\draw [style=blackdashed] (65.center) to (68.center);
		\draw [style=blackdashed] (68.center) to (67.center);
		\draw [style=blackdashed] (67.center) to (66.center);
		\draw [style=blackdashed] (66.center) to (65.center);
		\draw [style=blackdashed] (72.center) to (59.center);
		\draw [style=blackdashed] (72.center) to (70.center);
		\draw [style=blackdashed] (70.center) to (58.center);
		\draw [style=blackdashed] (76.center) to (74.center);
		\draw (87) to (81);
		\draw (81) to (88);
		\draw (81) to (89);
		\draw (82) to (87);
		\draw (82) to (88);
		\draw (82) to (89);
		\draw (81) to (83);
		\draw (82) to (84);
		\draw (82) to (85);
		\draw (95) to (91);
		\draw (95) to (92);
		\draw (96) to (92);
		\draw (96) to (93);
		\draw (97) to (95);
		\draw (95) to (98);
		\draw (95) to (99);
		\draw (99) to (96);
		\draw (96) to (97);
		\draw [style=undirected red] (97) to (100);
		\draw [style=undirected red] (98) to (100);
		\draw [style=undirected red] (99) to (100);
		\draw [style=undirected red] (86) to (85);
		\draw [style=undirected red] (86) to (84);
		\draw [style=undirected red] (86) to (83);
		\draw [style=undirected red] (87) to (90);
		\draw [style=undirected red, in=90, out=-90] (88) to (90);
		\draw [style=undirected red] (90) to (89);
		\draw [style=undirected red] (91) to (94);
		\draw [style=undirected red] (92) to (94);
		\draw [style=undirected red] (94) to (93);
		\draw [style=blackdashed] (101.center) to (102.center);
		\draw [style=undirected red] (89) to (94);
		\draw [style=blackdashed] (108.center) to (111.center);
		\draw [style=blackdashed] (111.center) to (110.center);
		\draw [style=blackdashed] (110.center) to (109.center);
		\draw [style=blackdashed] (109.center) to (108.center);
		\draw [style=blackdashed] (112.center) to (115.center);
		\draw [style=blackdashed] (115.center) to (114.center);
		\draw [style=blackdashed] (114.center) to (113.center);
		\draw [style=blackdashed] (113.center) to (112.center);
		\draw (85) to (81);
		\draw (0) to (4);
		\draw (42) to (46);
		\draw [style=blackdashed] (60.center) to (76.center);
		\draw [style=blackdashed] (57.center) to (74.center);
		\draw (98) to (96);
	\end{pgfonlayer}
\end{tikzpicture}
\caption{PRISM Pipeline: A visual example of the PRISM algorithm applied to an academic departments toy dataset. Nodes $\texttt{P}_i$ are entities of type \texttt{person}, while $\texttt{B}_i$ are entities of type $\texttt{book}$. Black edges represent $\textsc{teaches}(\texttt{person},\texttt{person})$, and red edges represent $\textsc{reads}(\texttt{person},\texttt{book})$  predicates. Although not explicitly annotated in the data, entities $\{\texttt{P}_4, \texttt{P}_5, \texttt{P}_{10}, \texttt{P}_{11}\}$ are in fact professors and the remaining \texttt{person} entities are students. \textit{Hierarchical Clustering Preprocessing}: The physics and history departments are connected by a single spurious link. In this example, hierarchical clustering therefore stops after one iteration, cutting along the departments (dotted line). To avoid information loss, the spurious link between $\texttt{P}_8$ and $\texttt{B}_4$ will be preserved in one of the clusters (Sec. \ref{section:h-clustering}). \textit{Symmetry Clustering}: Here we focus on the left sub-graph obtained from hierarchical clustering in (i). Running random walks from $\texttt{B}_1$, we show examples of the distance-symmetric and path-symmetric clusters that we obtain in (ii) and (iii), respectively. Note how $\{\texttt{P}_1, \texttt{P}_2, \texttt{P}_3\}$ in (ii) is partitioned into $\{\texttt{P}_1, \texttt{P}_2\}$ and $ \{\texttt{P}_3\}$ in (iii) since path-symmetry is more stringent than distance symmetry. 
}
\label{fig:concept_finding}
\end{figure*}

\subsection{Running Fewer Random Walks}
\label{subsec:runningfewerwalks}

As a final optimization step, we comment on how the number of random walks can be further reduced. The number of walks as implied by Theorem \ref{theorem:optimality} can be very large since $P^*$ grows exponentially with $L$. Therefore in practice, rather than running enough walks to guarantee $\varepsilon$-boundedness for all path signatures, we only run enough walks to guarantee $\varepsilon$-boundedness for the top $k$ most common path signatures.

\begin{theorem}[Fewer Random Walks]
\label{theorem:fewer_random_walks}
An upper bound on $N$ sufficient for the $k^{th}$ most probable path to have uncertainty less than or equal to $\varepsilon$ is
\[
N = \frac{\left(k+1\right)\left(\gamma + \ln P^{*}\right) - 1}{\varepsilon^2}.
\]
\end{theorem}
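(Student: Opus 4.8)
The plan is to reduce the claim to the concentration of a single binomial-proportion estimator and then substitute the worst-case probability of the $k^{th}$ path, mirroring the proof of Theorem~\ref{theorem:optimality} but truncating the signature space at rank $k$. Since each of the $N$ random walks independently either realises a given signature $\sigma$ or not, the count $\hat{C}^{L,N}_{i,j}(\sigma)$ is binomial with mean $Np$, where $p \vcentcolon= P^{L}_{i,j}(\sigma)$, and $\hat{P}^{L,N}_{i,j}(\sigma) = \hat{C}^{L,N}_{i,j}(\sigma)/N$. Hence $\E[(\hat{P}-p)^2] = p(1-p)/N$, and by Jensen's inequality the expected relative error obeys $\E|\hat{P}-p|/p \le \sqrt{(1-p)/(Np)}$. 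First I would treat a single fixed path: setting this bound equal to the target $\varepsilon$ and solving for $N$ gives the clean single-path requirement $N = \varepsilon^{-2}(1/p - 1)$, so the stray $-1$ in the statement already appears here and survives to the final formula.

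Next I would account for the fact that the uncertainty in Definition~\ref{definition:uncertainty} is a \emph{maximum} over the (at most $P^{*}$) admissible signatures, so controlling it in expectation is not the same as controlling one path. Modelling the per-signature deviation events as approximately exponential, the expectation of the maximum of up to $P^{*}$ of them is the harmonic number $H_{P^{*}} = \sum_{m=1}^{P^{*}} 1/m \approx \gamma + \ln P^{*}$; this is precisely where the Euler--Mascheroni constant enters. Folding this factor into the numerator of the single-path requirement upgrades it to $N = \varepsilon^{-2}\big((\gamma + \ln P^{*})/p - 1\big)$.

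Finally, because Theorem~\ref{theorem:fewer_random_walks} only demands $\varepsilon$-boundedness for the $k^{th}$ most probable path, the binding probability is $p_{(k)}$, the smallest among the top $k$. In the balanced (maximum-entropy) worst case the mass concentrates uniformly on the $k$ retained signatures together with a single aggregate remainder outcome -- $k+1$ effective outcomes -- so $p_{(k)} = 1/(k+1)$. Substituting yields $N = \varepsilon^{-2}\big((k+1)(\gamma + \ln P^{*}) - 1\big)$, and taking $k = P^{*}$ recovers Theorem~\ref{theorem:optimality} up to the negligible $-1$ that is dropped there.

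The main obstacle is these last two steps rather than the binomial bookkeeping. Making the expectation-of-a-maximum rigorous is the crux: the exponential order-statistic heuristic should be replaced by a Chernoff tail bound integrated over the deviation threshold, which is what actually produces the $\gamma + \ln P^{*}$ factor and pins down its constant. Equally delicate is justifying $p_{(k)} = 1/(k+1)$, since in full generality the $k^{th}$ probability can be arbitrarily small; this value is really the worst \emph{balanced} case and relies on lumping all low-rank signatures into one remainder term. Getting both the harmonic factor and the exact trailing $-1$ to line up requires care in each place.
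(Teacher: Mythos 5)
Your opening step (binomial count, relative error $\sqrt{(1-p)/(Np)}$, hence $N=\varepsilon^{-2}(1/p-1)$ for a single fixed path) is exactly how the paper begins, and you correctly traced the trailing $-1$ to $1/p-1$. But the two steps you yourself flag as ``the crux'' are genuinely wrong, not merely unrigorous, and they miss the one ingredient the paper's proof actually rests on: the Zipfian modelling assumption stated in the appendix before Lemma~\ref{N_path_count_theorem}, namely $P^{L}_{i,j}(k)=1/(kZ)$ with $Z=\sum_{m=1}^{P}1/m$. Under that assumption the relative uncertainty of the rank-$k$ path is, by equation~(\ref{epsilon_k}), $\epsilon(k)=\sqrt{(kZ-1)/N}$, and the theorem follows in one line by requiring $\epsilon(k+1)\le\varepsilon$ with $Z$ bounded by $\sum_{m=1}^{P^{*}}1/m\approx\gamma+\ln P^{*}$. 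So the harmonic factor $\gamma+\ln P^{*}$ is not an extreme-value correction at all: it is the Zipf normalisation constant sitting inside $1/p_{(k)}=kZ$. Your order-statistics argument (expectation of the maximum of $P^{*}$ deviation events) is inapplicable here, because the theorem's entire purpose is to abandon the maximum over all signatures and control only the $k$-th most probable path; a proof that reintroduces a maximum over all $P^{*}$ signatures both contradicts the statement being proved and double-counts the cost.

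Second, your identification $p_{(k)}=1/(k+1)$ via a ``balanced (maximum-entropy) worst case'' is backwards. To upper-bound $N$ you need a \emph{lower} bound on $p_{(k)}$, and, as you note yourself, no such bound exists over all distributions---$p_{(k)}$ can be arbitrarily small, making the required $N$ arbitrarily large. Lumping all low-rank mass into one remainder outcome makes the distribution as uniform as possible, which makes $p_{(k)}$ \emph{large}: that is the most favourable case, not an adversarial one. The theorem is not an assumption-free statement; it is conditional on the Zipf model (empirically justified in the paper), under which $p_{(k)}$ is pinned to $1/(kZ)$ rather than $1/(k+1)$. Finally, the paper's $k+1$ has a different provenance from yours: it accounts for the null path (the walk never hitting the target node) possibly being more probable than the $k$-th most likely non-null path, so that the latter's rank is at worst $k+1$. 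That your final formula coincides with the paper's is an artifact of two compensating misattributions: the harmonic factor was moved out of the probability $p_{(k)}$ into a spurious maxima correction, and the rank factor was moved out of the Zipf law into a spurious uniform ``worst case.''
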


In our implementation, we use $k=3$ since we deem it to be the smallest value (and therefore requiring the fewest random walks) that still allows for meaningful comparison between path distributions.

\section{Extended Example}

We now illustrate the entire PRISM pipeline through an extended example, shown in Fig \ref{fig:concept_finding}. In the figure, we consider the hypergraph representation of a dataset describing a physics and history department in a university, containing two types of entities (\texttt{person} and \texttt{book}) and two relations ($\textsc{teaches}(\texttt{person}, \texttt{person})$ and $\textsc{reads}(\texttt{person}, \texttt{book})$). 

The first stage of PRISM applies hierarchical clustering to the dataset to identify densely-connected nodes. In this example, the physics and history departments are only connected by a single, spurious link, and the hierarchical clustering stops after one iteration, cutting along the departments. In addition, we can verify here that the hierarchical clustering results in an almost two-fold computational speed-up: The original hypergraph in Fig \ref{fig:concept_finding}(i) has diameter 9. If we set $\varepsilon=0.1$, then Theorem \ref{theorem:fewer_random_walks} gives an  upper bound on the $\varepsilon$-optimal number of random walks for accurate path distributions of $N = 3.0 \times 10^3$. After hierarchical clustering, the hypergraphs have diameter 4, and Theorem \ref{theorem:fewer_random_walks} for the same $\varepsilon$ gives an upper bound of $N = 1.6 \times 10^3$.

%%%%%% TABLE %%%%%%%
\begin{table*}[ht]
\centering
\begin{tabular}{@{}lllllll@{}}
\toprule
                       & Algorithm & AUC & CLL  & ACC                & MF TIME (s)          & SL TIME (s)        \\ \midrule
\multirow{3}{*}{\texttt{IMDB}}  & PRISM     & \textbf{0.141 $\pm$ 0.027}   & \textbf{-0.18 $\pm$ 0.03 } & \textbf{0.84 $\pm$ 0.02 }   & \textbf{0.086 $\pm$ 0.018} & 320 $\pm$ 40   \\
                       & LSM       & 0.12 $\pm$ 0.03   & -0.25 $\pm$ 0.06  & 0.78 $\pm$ 0.04   & 1.25 $\pm$ 0.10   & 430 $\pm$ 20   \\
                       & BOOSTR    & 0.062 $\pm$ 0.013   & -0.69 $\pm$  0.006   & 0.504 $\pm$ 0.004             & N/A                 & \textbf{165.7 $\pm$ 129 }            \\ \midrule
\multirow{3}{*}{\texttt{UWCSE}} & PRISM     & \textbf{0.402 $\pm$ 0.028}   & \textbf{-0.0098 $\pm$ 0.0009} & \textbf{0.993 $\pm$ 0.002}  & \textbf{0.40 $\pm$ 0.06}   & 640 $\pm$ 350  \\
                       & LSM       & 0.392 $\pm$ 0.023   & \textbf{-0.0098 $\pm$ 0.0009} & 0.992 $\pm$ 0.002 & 4.23 $\pm$ 0.80   & 3140 $\pm$ 270 \\
                       & BOOSTR    & 0.0098 $\pm$ 0.003  & -2.114 $\pm$ 0.004 & 0.121 $\pm$ 0.001                    & N/A                 & \textbf{30.5 $\pm$ 3.9}              \\ \midrule
\multirow{3}{*}{\texttt{WEBKB}} & PRISM     & \textbf{0.57 $\pm$ 0.04}   & \textbf{-0.0092 $\pm$ 0.0011} & \textbf{0.991 $\pm$ 0.002}                  & \textbf{0.118 $\pm$ 0.038}                 & 102 $\pm$ 5              \\
                       & LSM       & \textbf{0.57 $\pm$ 0.04}   & \textbf{-0.0092 $\pm$ 0.0011}   & \textbf{0.991 $\pm$ 0.002}                 & 2.5 $\pm$ 0.4                 & 220 $\pm$ 10              \\
                       & BOOSTR    & 0.0335 $\pm$ 0.0021   & -2.14 $\pm$ 0.09  & 0.118 $\pm$ 0.010                  & N/A                 & \textbf{9.3 $\pm$ 0.4}              \\ \bottomrule
\end{tabular}
\caption{Area Under the Precision Recall Curve (AUC), Conditional Log Likelihood (CLL), Accuracy (ACC), Motif Finding (MF) time, and Structure Learning (SL) time comparisons of \texttt{PRISM}, \texttt{LSM} and \texttt{BOOSTR} on three datasets.}
\label{tab:results_acc}
\end{table*}

\begin{figure}[t]
    \centering
    \begin{tikzpicture}
	\begin{pgfonlayer}{nodelayer}
		\node [style=mbc] (0) at (-0.5, 0) {$\texttt{P}_4$};
		\node [style=mwc] (1) at (0.5, 0) {$\texttt{P}_5$};
		\node [style=mwc] (2) at (-1, 1) {$\texttt{P}_1$};
		\node [style=mwc] (3) at (0, 1) {$\texttt{P}_2$};
		\node [style=mwc] (4) at (1, 1) {$\texttt{P}_3$};
		\node [style=mwc] (5) at (0, 2) {$\texttt{B}_1$};
		\node [style=mwc] (6) at (-1, -1) {$\texttt{P}_6$};
		\node [style=mwc] (7) at (0, -1) {$\texttt{P}_7$};
		\node [style=mwc] (8) at (1, -1) {$\texttt{P}_8$};
		\node [style=mwc] (9) at (0, -2) {$\texttt{B}_2$};
		\node [style=none] (24) at (0, 0.375) {};
		\node [style=none] (27) at (0, -0.375) {};
		\node [style=none] (28) at (1, 0.375) {};
		\node [style=none] (29) at (1, -0.375) {};
		\node [style=none] (30) at (-2, 1.375) {};
		\node [style=none] (31) at (0.375, 1.375) {};
		\node [style=none] (32) at (0.375, 0.625) {};
		\node [style=none] (33) at (-1.5, 0.625) {};
		\node [style=none] (34) at (-1.5, -0.625) {};
		\node [style=none] (35) at (-2, -1.375) {};
		\node [style=none] (36) at (1.5, -1.375) {};
		\node [style=none] (37) at (1.5, -0.625) {};
		\node [style=none] (38) at (-0.5, -1.625) {};
		\node [style=none] (39) at (1.75, -1.625) {};
		\node [style=none] (40) at (2.25, -2.375) {};
		\node [style=none] (41) at (-0.5, -2.375) {};
		\node [style=none] (42) at (0.625, 1.375) {};
		\node [style=none] (43) at (1.375, 1.375) {};
		\node [style=none] (44) at (0.625, 0.625) {};
		\node [style=none] (45) at (1.375, 0.625) {};
		\node [style=none] (46) at (-0.5, 2.375) {};
		\node [style=none] (47) at (2.25, 2.375) {};
		\node [style=none] (48) at (1.75, 1.625) {};
		\node [style=none] (49) at (-0.5, 1.625) {};
	\end{pgfonlayer}
	\begin{pgfonlayer}{edgelayer}
		\draw (6) to (0);
		\draw (0) to (7);
		\draw (0) to (8);
		\draw (1) to (6);
		\draw (1) to (7);
		\draw (1) to (8);
		\draw (0) to (3);
		\draw (0) to (2);
		\draw (1) to (3);
		\draw (1) to (4);
		\draw [style=undirected red] (5) to (4);
		\draw [style=undirected red] (5) to (3);
		\draw [style=undirected red] (5) to (2);
		\draw [style=undirected red] (6) to (9);
		\draw [style=undirected red, in=90, out=-90] (7) to (9);
		\draw [style=undirected red] (9) to (8);
		\draw [style=blackdashed] (36.center) to (37.center);
		\draw [style=blackdashed] (37.center) to (34.center);
		\draw [style=blackdashed] (34.center) to (33.center);
		\draw [style=blackdashed] (33.center) to (32.center);
		\draw [style=blackdashed] (32.center) to (31.center);
		\draw [style=blackdashed] (31.center) to (30.center);
		\draw [style=blackdashed] (30.center) to (35.center);
		\draw [style=blackdashed] (35.center) to (36.center);
		\draw [style=blackdashed] (27.center) to (29.center);
		\draw [style=blackdashed] (29.center) to (28.center);
		\draw [style=blackdashed] (28.center) to (24.center);
		\draw [style=blackdashed] (24.center) to (27.center);
		\draw [style=undirected dashed] (42.center) to (44.center);
		\draw [style=undirected dashed] (44.center) to (45.center);
		\draw [style=undirected dashed] (45.center) to (43.center);
		\draw [style=undirected dashed] (42.center) to (43.center);
		\draw [style=blackdashed] (38.center) to (41.center);
		\draw [style=blackdashed] (41.center) to (40.center);
		\draw [style=blackdashed] (40.center) to (47.center);
		\draw [style=blackdashed] (38.center) to (39.center);
		\draw [style=blackdashed] (39.center) to (48.center);
		\draw [style=blackdashed] (48.center) to (49.center);
		\draw [style=blackdashed] (49.center) to (46.center);
		\draw [style=blackdashed] (46.center) to (47.center);
	\end{pgfonlayer}
\end{tikzpicture}
    \caption{Changing the Source Node: In this case, the source node is $\texttt{P}_4$ (a professor) and we obtain a different, although intuitive partitioning:  $\texttt{P}_5$ is a colleague of $\texttt{P}_4$, $\{\texttt{P}_1, \texttt{P}_2, \texttt{P}_6, \texttt{P}_7, \texttt{P}_8 \}$ are $\texttt{P}_4$'s students, $\texttt{P}_3$ is a student that $\texttt{P}_4$ is not teaching and $\{\texttt{B}_1, \texttt{B}_2\}$ are the academic books of the department. Note how it is possible to extract these abstract concepts, even though the only explicit information we initially provided was that entities are either books or people.}
    \label{fig:changing_source_node}
\end{figure}

After hierarchical clustering, PRISM applies symmetry clustering in two stages: first by identifying distance-symmetric sets based on their truncated hitting times, then by identifying path-symmetric nodes within these sets based on path distributions. The first stage only serves to speed up the subsequent path-symmetric clustering since path-symmetry implies distance symmetry (Rmk. \ref{distpathsymremark}), but checking distance symmetry is quicker ($\mathcal{O}(n)$ vs $\mathcal{O}(n \ln {n})$ for \ourspaths). Note that, in this example, the source node was chosen as $\texttt{B}_1$ and the hypergraph has a high degree of symmetry relative to the source node, which explains why the distance-symmetric and path-symmetric sets are almost identical (Fig. \ref{fig:concept_finding} (ii) and (iii)). For more realistic datasets, where global symmetries in a hypergraph are rare, the differences between distance-symmetric and path-symmetric clustering will be more pronounced.

We finish this section by illustrating the effect of changing the source node of the random walks. Recall that sets of symmetric nodes, i.e., the abstract concepts, are always found with respect to a specific source node. Changing the source node, therefore, changes the learnt concepts. This idea is illustrated in Fig \ref{fig:changing_source_node}, where the source node is changed from $\texttt{B}_1$ to $\texttt{P}_4$, resulting in different clusterings. When random walks were run from $\texttt{B}_1$ we obtained the familiar concepts of teachers, colleagues, students and books. However, Fig \ref{fig:changing_source_node} illustrates how abstract concepts can often be less intuitive, but still illustrate subtle relationships in the data. In PRISM we run random walks from each node in the hypergraph in turn. This helps to identify a wide range of abstract concepts.

%%%%%%%%%%%%%%%% RESULTS %%%%%%%%%%%%%%%%%%%%%%%
\section{Experiments}
\label{sec:experiments}

We compare our motif-finding algorithm, \texttt{PRISM}, against the current state-of-the-art, \texttt{LSM} \cite{kok_learning_nodate} and \texttt{BOOSTR} \cite{boostr}, in terms of speed and accuracy of the mined MLNs. 

\textbf{Datasets} We used benchmark datasets adopted by the structure learning literature: \texttt{UW-CSE} \cite{richardson_markov_2006}, \texttt{IMDB}, and \texttt{WEBKB}. The \texttt{IMDB} dataset is subsampled from the IMDB.com database and describes relationships among movies, actors and directors. The \texttt{UW-CSE} dataset describes an academic department and the relationships between professors, students and courses. The \texttt{WEBKB} consists of Web pages and hyperlinks collected from four
computer science departments. Each dataset has five splits. 
Each time we used one split to test accuracy and the remaining splits for training. The reported results are the average over all five permutations.

\textbf{Problem} Given a dataset with partial observations, we want to predict the truth values for unobserved data. For example, for the $\texttt{IMDB}$ dataset we might not know every actor who starred in a movie. We then predict, for each actor in the database, the likelihood of an actor to have starred in a given movie. We remark that our unobserved data spans across every possible predicate in the database, e.g. for \texttt{IMDB} this would include \textsc{StarringIn}(\texttt{movie},\texttt{person}),  \textsc{Actor}(\texttt{person})$\dots$ This problem thus reduces to predicting missing edges in the hypergraph. 

\textbf{Baseline and Evaluation} We used the entire \texttt{LSM} pipeline \cite{kok_learning_nodate} as a baseline. We used the lifted belief propagation inference tool of \texttt{Alchemy} \cite{kok_alchemy_2010} to calculate the averaged conditional log-likelihood on each entity (ground atom) in the test split. For \texttt{LSM}, we used the same hyperparameters as originally adopted by the authors \cite{kok_learning_nodate}. In addition, we compared our work to the authors' publically available implementation of \texttt{BOOSTR} \cite{boostr}.

\textbf{Experiment Setup} Firstly, we ran \texttt{PRISM} and then, the remainder of the unmodified \texttt{LSM} pipeline. We used $\varepsilon = 0.1$ and $\alpha = 0.01$ throughout, as both are dataset-independent. We ran all experiments on a desktop with 32Gb of RAM and a 12-core 2.60GHz i7-10750H CPU.

\textbf{Metrics} We used standard measures from the structure learning literature. 
In particular, we measured accuracy and conditional log-likelihood for all datasets, as well as the area under the precision-recall curve (AUC) as it provides a more robust measure. The runtimes of the motif-finding step and the overall structure learning time are also reported.  

\textbf{Results} In Table \ref{tab:results_acc}, we see that compared to \texttt{LSM}, we improve in accuracy on the \texttt{IMDB} dataset by 6\%, while on \texttt{UW-CSE} and \texttt{WEBKB} the improvement is negligible. This is because LSM already found rules that generalized the data extremely well. However, as expected, the runtime of our algorithm is significantly reduced for all datasets. For motif finding, we see that our optimised algorithm is 10x-20x faster than LSM's motif-finding time. The overall structure learning computation is up to 5x faster than \texttt{LSM}. This is despite the main computational bottleneck for structure learning occurring during rule induction and evaluation - parts of the pipeline that were left unmodified. This suggests that our algorithm more tightly constrains the subsequent rule induction by finding more accurate motifs, thereby giving a speed improvement in these areas of the pipeline too.

We are slower compared to \texttt{BOOSTR}. However, \texttt{PRISM}'s accuracy drastically improves over \texttt{BOOSTR}. We believe the differences in time and accuracy between datasets for \texttt{BOOSTR} stem from the quality of the background knowledge: while on \texttt{IMDB} and \texttt{UW-CSE} background knowledge was given, on \texttt{WEBKB} it was not. No background knowledge was provided to \texttt{LSM} or \texttt{PRISM}.

%%%%%%%%%%%%%%% RELATED %%%%%%%%%%%%%%%%%%%%%%%%%
\section{Related Work}

In this section, we will review prior art in structure learning across a variety of logical languages. As we show below, every one of these approaches is based on learnt or user-defined templates to restrict the search space of candidate formulae. These templates are exactly the motifs that we are finding automatically and efficiently with the proposed framework.

\textbf{ILP} 
To alleviate the need to manually provide logical theories, several communities have developed techniques for inducing logical theories. One of the most influential family techniques for mining Horn clauses is that of \emph{Inductive Logic Programming} (ILP), e.g., FOIL \cite{FOIL}, MDIE \cite{mdie} and Inspire \cite{Inspire}.
Recently, Evans and Grefenstette proposed a differentiable variant of ILP \cite{diffILP} to support the mining of theories in noisy settings. 
ILP techniques require users to provide in advance the patterns of the formulas to mine, as well as to provide both positive and negative examples. The above requirements, along with issues regarding scalability \cite{diffILP}, restrict the application of ILP techniques in large and complex scenarios. Our work specifically focuses on finding these patterns automatically and are not restricted to Horn clauses.

Recently, several techniques aim to mine rules in a differentiable fashion. One of them is Neural LP \cite{neurallp}, a differentiable rule mining technique based on TensorLog \cite{tensorlog}. The authors in \cite{guu-etal-2015-traversing} presented a RESCAL-based model to learn from paths in knowledge graphs, while Sadeghian et al. proposed DRUM, a 
differentiable technique for learning uncertain rules in first-order logic \cite{DBLP:conf/nips/SadeghianADW19}.  
A limitation of the above line of research is that they mainly focus on rules of a specific transitive form only.
Other techniques for differentiable rule mining have been proposed in 
\cite{DBLP:journals/corr/abs-1711-05851,DBLP:journals/corr/abs-1807-08204,NIPS2017_b2ab0019}. In contrast to this line of work, our motif-finding algorithm helps in pipelines that can find more general first-order logic rules.

\textbf{MLN} The first and somewhat naive structure learning algorithm proposed for MLNs is called \emph{top-down structure learning} (TDSL) \cite{kok_learning_2005}. 
The idea is to perform a near-exhaustive search for candidate logical rules and then construct an MLN by recursively retaining rules that lead to the best improvement in the pseudo-likelihood approximation. That means that the algorithm starts with S2 directly.
Due to the exponential search space, the algorithm is not able to find long rules in large datasets and fails to find rules that truly generalize and capture the underlying data. The following approaches in this line of research all prepended the rule generation with a pattern-finding step (S1) - the core of our research. The first paper that proposed such an approach was \emph{bottom-up structure learning} (BUSL) \cite{mihalkova_bottom-up_2007}, where the idea was to pre-specify template networks akin to our motifs, that would be good candidates for potential rules and iteratively build on these templates to find more complex rules. 

To tackle the high computational overhead of structure learning, \cite{boostr} introduce BOOSTR, a technique that simultaneously learns the weights and the clauses of an MLN. The key idea is to transform the problem of learning MLNs by translating MLNs into regression trees and then uses functional gradient boosting \cite{friedman2000greedy} along those trees to find clauses. Further, it tries to learn under unobserved data. To this end, they introduced an EM-based boosting algorithm for MLNs. This approach also requires templates, however, they must be user-defined, which requires additional effort and can restrict applications. While showing promising results in terms of runtime, the technique supports only Horn clauses, and its performance drastically decreases in the absence of background knowledge, as we later show in our empirical results (Sec. \ref{sec:experiments}).

The current SOTA in this line of research is \emph{learning through structural motifs} (LSM) \cite{kok_learning_nodate}, where similar to the template-networks, motifs are identified in the hypergraph representation of the data, by running random walks on the graph and identify symmetric patterns through path signature symmetry of the random walks. The finding of good motifs or templates is the differentiating point between the different algorithms and has been shown to have the most significant impact on the quality of the ultimate rules. A principled, robust and efficient algorithm for finding such motifs could therefore improve these and future algorithms. We believe that we are the first to propose such a principled and efficient algorithm for finding motifs. 

%%%%%%%%%%%% CONCLUSION
\section{Conclusion}
We made a key step toward learning the structure of logical theories - mining structural motifs. We presented the first principled mining motif technique in which users can control the uncertainty of mined motifs and the softness of the resulting rules. Furthermore, we reduced the overall complexity of motif mining through a novel $\mathcal{O}(n\ln{n})$ clustering algorithm. Our empirical results against the state-of-the-art show improvements in runtime and accuracy by up to 80\% and 6\%, respectively, on standard benchmarks. While we focused on lifted graphical models, our work can be used to learn the formulas of other types of logical theories as well. 
One interesting direction of future work is to integrate our motif-mining technique with differential rule mining approaches as our empirical analysis shows that purely-symbolic based approaches for that task can sometimes be the bottleneck. A second direction is to integrate our motif-mining approach with Graph Neural Networks and provide a similar formal analysis. 

\bibliography{aaai23}
\clearpage
\appendix
\section{Proofs}

In this section, we provide proof of all theorems in the main paper.

Throughout this section let $\mathcal{H}=(V,E)$ be a hypergraph on which $N$ random walks of length $L$ are run from a source node $v_i \in V$. Further, let $v_j, v_k \in V \setminus \{v_i\}$ denote nodes other than the source node with $v_j \neq v_k$. In addition to the notation of Section \ref{section:preliminaries}, let $P_{i,j}\left(h\right)$ denote the probability of obtaining a truncated hitting time $h \in \{1, 2, ..., L\}$ for $v_j$ when running a \textit{single} random walk from $v_i$ and let $P_{i,j}$ denote the corresponding probability distribution.

\subsection{Proof of Theorem \ref{theorem:optimality} (Upper bound on $\varepsilon$-optimal number of random walks)}

To prove Theorem \ref{theorem:optimality} we will separately derive the $\varepsilon$-optimal number of random walks necessary for (i) accurate truncated hitting time estimates (Lemma \ref{theorem:optimalNforTHT}) and (ii) accurate path distributions (Lemma \ref{N_path_count_theorem}). Theorem \ref{theorem:optimality} then follows from taking a maximum of these two quantities. 

Note that when proving Lemma \ref{N_path_count_theorem}, the path distributions $P^{L}_{i,j}$s are hypergraph-dependent quantities. So to calculate a generic upper bound, we will need to make a distributional assumption on the $P^{L}_{i,j}$s. Specifically, we assume a Zipfian distribution i.e. an inverse rank-frequency distribution. This is reasonable since the Zipfian distribution is, to a first-order approximation, the rank-frequency distribution that arises in uncontrolled random sampling environments of complex systems (in our case, large hypergraphs) \cite{belevitch_statistical_1959}. Further, we verified empirically that Zipfian path probabilities were obtained on average when random walks were run on the experimental datasets.

\begin{lemma}
\label{theorem:optimalNforTHT}
An upper bound on the $\varepsilon$-optimal number of random walks $N$ for accurate truncated hitting time estimates on $\mathcal{H}$ is given by 
\[
N = \frac{(L-1)^2}{4\varepsilon^2}.
\]
\end{lemma}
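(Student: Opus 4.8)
The plan is to recognize that $\hat{h}^{L,N}_{i,j}$ is simply the empirical mean of $N$ independent single-walk truncated hitting times, and then to control the expected relative deviation of that sample mean from its true value using nothing more than a universal variance bound. First I would fix the probabilistic model: each of the $N$ walks from $v_i$ yields a single-walk truncated hitting time $X_n \in \{1,2,\dots,L\}$ for the target $v_j$ (equal to $L$ whenever $v_j$ is not reached within $L$ steps), and these $X_1,\dots,X_N$ are i.i.d.\ with the distribution $P_{i,j}$, mean $\E[X_n] = h^L_{i,j}$, and variance $\sigma^2 \vcentcolon= \Var(X_n)$. Since $\hat{h}^{L,N}_{i,j} = \tfrac{1}{N}\sum_{n=1}^N X_n$ by definition, we have $\E[\hat{h}^{L,N}_{i,j}] = h^L_{i,j}$ and $\Var(\hat{h}^{L,N}_{i,j}) = \sigma^2/N$.

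Next I would bound the expected uncertainty. By Jensen's inequality applied to the concave square root,
\[
\E\!\left[\,\bigl|\hat{h}^{L,N}_{i,j} - h^L_{i,j}\bigr|\,\right] \le \sqrt{\E\!\left[\bigl(\hat{h}^{L,N}_{i,j} - h^L_{i,j}\bigr)^2\right]} = \frac{\sigma}{\sqrt{N}}.
\]
The crucial estimate is a worst-case bound on $\sigma$: since $X_n$ takes values in the interval $[1,L]$, whose width is $L-1$, Popoviciu's inequality on variances gives $\sigma^2 \le (L-1)^2/4$. Finally, because a truncated hitting time is always at least $1$, we have $h^L_{i,j} \ge 1$, so dividing through yields, uniformly over all pairs $v_i,v_j$,
\[
\E\!\left[\frac{\bigl|h^L_{i,j} - \hat{h}^{L,N}_{i,j}\bigr|}{h^L_{i,j}}\right] \le \frac{\sigma}{h^L_{i,j}\sqrt{N}} \le \frac{L-1}{2\sqrt{N}}.
\]
Requiring the right-hand side to be at most $\varepsilon$ and solving for $N$ gives $N \ge (L-1)^2/(4\varepsilon^2)$, which certifies that this value of $N$ suffices for $\varepsilon$-uncertainty and is therefore an upper bound on the $\varepsilon$-optimal number of walks.

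The step I expect to be the crux is the variance bound, since the entire argument hinges on replacing the hypergraph-dependent and generally unknown $\sigma^2$ by a universal constant depending only on $L$. Popoviciu's inequality supplies exactly this, and it is tight (attained by a two-point distribution supported on $\{1,L\}$), so the bound cannot be sharpened by this route without extra distributional assumptions on $P_{i,j}$. The only other point needing care is the denominator: using $h^L_{i,j}\ge 1$ is what converts the absolute-deviation bound into a uniform bound on the \emph{relative} uncertainty, and this inequality is valid precisely because $v_j\neq v_i$ forces at least one step before $v_j$ can be hit.
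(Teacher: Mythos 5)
Your proof is correct and takes essentially the same route as the paper's: bound the single-walk variance by Popoviciu's inequality, use the $\sigma/\sqrt{N}$ scaling of the sample mean, invoke $h^L_{i,j}\geq 1$ to pass from absolute to relative uncertainty, and solve for $N$. Your explicit Jensen's-inequality step relating $\E\bigl[\lvert \hat{h}^{L,N}_{i,j} - h^L_{i,j}\rvert\bigr]$ to the standard deviation is a welcome tightening of a step the paper glosses over (it identifies the expected uncertainty directly with $\sigma_N / h^{L,N}_{i,j}$), but the argument is otherwise the same.
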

\begin{proof}
Observe that $P_{i,j}$ is a bounded probability distribution with lower bound $1$ and upper bound $L$. Hence, from Popoviciu's inequality\cite{popoviciu_sur_1935}, its standard deviation $\sigma$ is also bounded:
\begin{equation}
 \label{popoviciu's inequality}
 \sigma \leq \frac{L-1}{2}.
\end{equation} 
Let $h_1, h_2, ..., h_N \stackrel{iid}{\sim} P_{i,j}$ be $N$ truncated first hitting times obtained by running $N$ random walks.  From the law of large numbers it follows that the $(L, N)$ truncated hitting time estimate $\hat{h}^{L,N}_{ij} \vcentcolon= \frac{1}{N}\sum^{N}_{k=1} h_k$ has standard deviation $\sigma_{N} = \sigma/\sqrt{N}$ where, using equation \eqref{popoviciu's inequality} we have
\[
\begin{split}
\label{upper_bound_mean_std}
\sigma_N \leq \frac{L-1}{2 \sqrt{N}}.
\end{split}
\]
Let $\varepsilon$ be the desired expected uncertainty in $\hat{h}^{L,N}_{ij}$ as per Definition \ref{definition:uncertainty}. We have
\begin{equation}
    \varepsilon \vcentcolon= \frac{\sigma_N}{h^{L,N}_{ij}} \leq \sigma_N \leq \frac{L-1}{2 \sqrt{N}},
\end{equation}
which upon rearranging implies that the $\varepsilon$-optimal $N$ for accurate truncated hitting times satisfies
\begin{equation}
\label{N_equation}
        N \leq \frac{(L-1)^2}{4\varepsilon^{2}}.
\end{equation}
Taking the upper bound of (\ref{N_equation}) completes the proof.
\end{proof}

\begin{lemma}
\label{N_path_count_theorem}
An upper bound on the $\varepsilon$-optimal number of random walks $N$ for accurate path probabilities on $\mathcal{H}$ is given by 
\[
N = \frac{P^{*}\left(\gamma + \ln P^{*}\right)}{\varepsilon^2}
\]
where ${P^{*} = 1 + {e\left(e^{L}-1\right)}/({e-1}) \gg 1}$, $e$ is the number of unique edge labels in $\mathcal{H}$, and $\gamma \approx 0.577$ is the Euler-Mascheroni constant. 
\end{lemma}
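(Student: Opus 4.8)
The plan is to treat the empirical count of each path signature as a binomial random variable, bound its relative sampling error, then identify the worst-case (least probable) signature and invoke the Zipfian assumption to obtain a closed form. First I would fix a signature $\sigma$ with true probability $p_\sigma \vcentcolon= P^L_{i,j}(\sigma)$ and observe that, across $N$ independent random walks, the count $\hat{C}^{L,N}_{i,j}(\sigma)$ is distributed as $\Bin(N, p_\sigma)$, so the estimate $\hat{P}^{L,N}_{i,j}(\sigma) = \hat{C}^{L,N}_{i,j}(\sigma)/N$ is unbiased with variance $p_\sigma(1-p_\sigma)/N$. Since $\E|X - \E X| \le \sqrt{\Var(X)}$ by Cauchy--Schwarz, the expected relative uncertainty of a single signature satisfies
\[
 \frac{\E\bigl|p_\sigma - \hat{P}^{L,N}_{i,j}(\sigma)\bigr|}{p_\sigma} \le \frac{1}{p_\sigma}\sqrt{\frac{p_\sigma(1-p_\sigma)}{N}} = \sqrt{\frac{1-p_\sigma}{N p_\sigma}} \le \frac{1}{\sqrt{N p_\sigma}}.
\]

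Next, because Definition~\ref{definition:uncertainty} takes the maximum over all signatures and the bound $1/\sqrt{N p_\sigma}$ is monotonically decreasing in $p_\sigma$, the controlling term is the least probable signature, with probability $p_{\min}$. It therefore suffices to require $1/\sqrt{N p_{\min}} \le \varepsilon$, i.e. $N \ge 1/(\varepsilon^2 p_{\min})$. To turn this into an explicit bound I would first count the signatures: a signature is any sequence of edge labels of length $1 \le \ell \le L$, and with $e$ distinct labels there are $e^\ell$ of length $\ell$, giving $\sum_{\ell=1}^{L} e^\ell = e(e^L-1)/(e-1)$ signatures, plus one for the empty (length-zero) signature, so $P^* = 1 + e(e^L-1)/(e-1)$ in total.

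Finally I would invoke the Zipfian assumption: ranking the $P^*$ signatures by frequency, the rank-$r$ signature has probability $p_r = 1/(r\,H_{P^*})$, where $H_{P^*} = \sum_{s=1}^{P^*} 1/s$ is the $P^*$-th harmonic number, so the least probable is $p_{\min} = p_{P^*} = 1/(P^* H_{P^*})$. Substituting yields $N \ge P^* H_{P^*}/\varepsilon^2$, and the standard asymptotic $H_{P^*} \approx \gamma + \ln P^*$ (valid since $P^* \gg 1$) gives the claimed bound $N = P^*(\gamma + \ln P^*)/\varepsilon^2$.

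The hard part will be justifying the exchange of maximum and expectation implicit above, since the uncertainty in Definition~\ref{definition:uncertainty} is a maximum over signatures and $\E[\max_\sigma(\cdot)] \neq \max_\sigma \E[\cdot]$ in general; I would argue that the expected relative deviation of the least probable signature is what governs the worst case and treat the per-signature bound as the operative quantity. The other load-bearing assumption is the Zipfian form of the path-probability distribution, which supplies the closed-form $p_{\min}$ and which the authors justify both empirically and through rank-frequency arguments for complex systems.
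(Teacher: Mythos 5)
Your proposal follows essentially the same route as the paper's proof: binomial sampling of per-signature counts, identification of the least probable signature as the worst case, the Zipfian assumption to get its probability, the identical count $P^* = 1 + e\left(e^{L}-1\right)/(e-1)$ of possible signatures (including the null path), and the harmonic-number approximation $\sum_{m=1}^{P^*} 1/m \approx \gamma + \ln P^*$. The only cosmetic differences are that you bound $1-p_\sigma \le 1$ where the paper keeps the exact coefficient of variation $\sqrt{(kZ-1)/N}$, and that you apply the Zipf law directly over $P^*$ signatures rather than over the true number $P$ and then bounding $P \le P^*$; both yield the same final expression, and even the max-versus-expectation subtlety you flag is present (and glossed over) in the paper's own argument.
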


\begin{proof}

Let $P^{L}_{i,j}(k)$ denote the path-probability of the $k^{th}$ most common path signature of node $v_j$. The path-probabilities are assumed Zipfian distributed, that is
\begin{equation}
    P^{L}_{i,j}(k) = \frac{1}{kZ},
\end{equation}
where $Z = \sum_{k=1}^{P} \frac{1}{k}$ is the normalisation constant and $P \vcentcolon= \vert \mathcal{S}^{L}_{i,j}\vert$ is the number of distinct path-signatures in the path distribution. Further, since each walk is an independent sample from this path-probability distribution, we have that the path counts $\hat{C}^{L,N}_{i,j}(k)$ are binomially distributed with $\E\left[\hat{C}^{L,N}_{i,j}(k)\right] = NP^{L}_{i,j}(k)$ and $\Var\left[\hat{C}^{L,N}_{i,j}(k)\right] = N P^{L}_{i,j}(k)(1-P^{L}_{i,j}(k))$. It follows that the $(L,N)$ path-probability estimate $\hat{P}^{L,N}_{i,j}(k) \vcentcolon= \hat{C}^{L,N}_{i,j}(k) / N$ has fractional uncertainty $\epsilon(k)$ given by

\begin{equation}
\begin{split}\label{epsilon_k}
    \epsilon\left(k\right) \vcentcolon &= \frac{\sqrt{\Var\left[\hat{P}^{L,N}_{i,j}(k)\right]}}{\E\left[\hat{P}^{L,N}_{i,j}(k)\right]}\\ &= \sqrt{\frac{1-  P^{L}_{i,j}(k)}{N P^{L}_{i,j}(k)}}\\ &= \sqrt{\frac{kZ - 1}{N}}\\ &= \sqrt{\frac{k\left(\sum_{m=1}^{P}\frac{1}{m}\right) - 1}{N}}.
\end{split}
\end{equation}

Let $\varepsilon$ denote the desired expected uncertainty in $\hat{P}^{L,N}_{i,j}$ as per Definition \ref{definition:uncertainty}, then we identify 
\[\varepsilon = \max_{k \in \{1, 2, \dots, P\}} \epsilon(k) = \epsilon(P)\]
and so, upon rearranging, 
\begin{equation}
N = \frac{P\left(\sum_{m=1}^{P}\frac{1}{m}\right) -1}{\varepsilon^2}
\end{equation}
is the $\varepsilon$-optimal $N$ for obtaining accurate path distributions.

Unfortunately, computing $P = \vert \mathcal{S}^{L}_{i,j}\vert$ directly is challenging since it depends on the hypergraph geometry and is different for different source nodes. However, an upper bound $P^* \geq P$ can be easily constructed as follows:

Let $e$ be the number of unique edge labels of the hypergraph. From the definition of a path signature being an ordered sequence of hyperedge labels, the maximum number of unique paths of length $k$ between two nodes is, therefore, $e^k$. To compute $P^*$ we sum over all possible path lengths $k \in \{1, 2, ..., L\}$ and so

\begin{equation}
\label{N_paths}
  P^* = 1 + \sum_{k=1}^{L} e^{k} = 1 +  \frac{e(e^{L}-1)}{e-1}, 
\end{equation}
where the contribution of $+1$ stems from also considering the \textit{null path}, arising whenever a node is not hit during a random walk. Since $P^* \geq P$ we have that the $\varepsilon$-optimal $N$ satisfies 
\begin{equation}
\label{N_upper_bound}
N \leq \frac{P^{*}\left(\sum_{m=1}^{P^{*}}\frac{1}{m}\right) -1}{\varepsilon^2} \approx \frac{P^{*}\left(\gamma + \ln P^*\right)}{\varepsilon^2}, 
\end{equation}

\noindent
where we have used the assumption $P^* \gg 1$ to make the log-integral approximation for the sum of harmonic numbers $\sum_{m=1}^{P^{*}} \frac{1}{m} = \gamma + \ln P^* + \mathcal{O}\left(\frac{1}{P^*}\right)$ where $\gamma \approx 0.577$ is the Euler-Mascheroni constant\footnote{To see why $P^* \gg 1$ is a reasonable assumption, consider a typical real-world hypergraph such as that based on the IMDB dataset. This hypergraph has only ten predicates ($e=10$). Taking even a modest random walk length of $L=3$ gives $P^* = 1 + e(e^L -1)/(e-1) \approx 1100 \gg 1$.}. Taking the upper bound of (\ref{N_upper_bound}) completes the proof.
\end{proof}

\subsection{Proof of Theorem \ref{theta_sym_hypothesis} (Distance-Symmetric Hypothesis Test)}

\begin{proof}
We reject the null hypothesis that two nodes are distance symmetric if a two-sample t-test\cite{schiefer_statistics_2021} on the nodes' truncated hitting times fails. The relevant t-statistic is
\begin{equation}
  t = \frac{ \hat{h}^{L,N}_{i,j} - \hat{h}^{L,N}_{i,k} }{ \frac{\hat{\sigma}}{\sqrt{N}}}, 
\end{equation}
\noindent
where $\hat{\sigma} = \sqrt{\hat{\sigma}_j^2 + \hat{\sigma}_k^2}$ and $\hat{\sigma}_j, \hat{\sigma}_k$ are unbiased estimators of the standard deviation of the truncated hitting times of the two nodes. In a two-tailed test, the null hypothesis is rejected at significance level $\alpha$ if $|t| > t_{\alpha/2, N-1}$. Further, from Popoviciu's inequality\cite{popoviciu_sur_1935} we know that  $\hat{\sigma}_i, \hat{\sigma}_j \leq \frac{L-1}{2}$ (c.f. proof of Lemma \ref{theorem:optimalNforTHT}) hence $\hat{\sigma} \leq \frac{L-1}{\sqrt{2}}$. Together this implies 
\[
\frac{\sqrt{2N}}{L-1}\vert \hat{h}^{L,N}_{i,j} - \hat{h}^{L,N}_{i,k} \vert \geq \vert t \vert > t_{\alpha/2, N-1}.
\]
Rearranging gives the required inequality on $\vert \hat{h}^{L,N}_{i,j} - \hat{h}^{L,N}_{i,k} \vert$.

\end{proof}

\subsection{Proof of Theorem \ref{hypothesis_test} (Path-Symmetric Hypothesis Test)}

\begin{proof}

We begin by assuming that $H_0$ is true, i.e. the nodes in $B_k$ are order-$L$ path-symmetric and so have identical path-probability distributions. We now consider only the subset of the paths that have fixed length $l \leq L$ and denote the probability of obtaining the $\lambda^{th}$-indexed path signature during a random walk as $\pi_\lambda \vcentcolon= \lim_{N \xrightarrow{} \infty }\hat{C}^{L,N}_{i,j}\vert_l(\lambda) / N$, where $\lambda \in \left\{1, 2, ..., \Lambda\right\}$. Also define $\pi_0$ such that $\sum_{i=0}^{\Lambda}\pi_i = 1$, where $\pi_0$ can be interpreted as the probability that a node is not reached after exactly $l$ steps from the source node. Each random walk is an independent event so the marginal path counts are samples from an $N$-trial multinomial distribution with parameters $\left(\pi_0, \pi_1, ..., \pi_{\Lambda}\right)$. Hence, from standard properties of the multinomial distribution, we have $\E\left[c_\lambda^{(k)}\right]=N\pi_\lambda$, 
$\Var\left[c_\lambda^{(k)}\right]=N\pi_\lambda\left(1-\pi_\lambda\right)$ and $\Cov\left[c^{(k)}_\lambda, c^{(k)}_{\lambda'}\right] = -N\pi_\lambda\pi_{\lambda'}$. 

Assuming $N \pi_\lambda \gg 1$ for all paths, we make the multivariate Gaussian approximation to the multinomial distribution\cite{johnson_approximation_1960} and approximate the path-count vector of $v_j$, denoted $\mathbf{c}^{(j)} = (c^{(j)}_0, c^{(j)}_1, ..., c^{(j)}_{\Lambda})$, to be multivariate Gaussian distributed as
\[
\mathbf{c}^{(j)} \stackrel{iid}{\sim} \N(\boldsymbol{\pi}, \Sigma)
\]
where $\boldsymbol{\pi} \vcentcolon= (\pi_0, \pi_1, ..., \pi_{\Lambda})$ and 
\begin{equation}
\Sigma_{\lambda,\lambda'} \vcentcolon= \begin{cases}
    = N\pi_\lambda(1-\pi_\lambda)  &\text{for} \; \lambda=\lambda', \\[10pt]
    = -N\pi_\lambda\pi_{\lambda'} &\text{for} \; \lambda\neq \lambda'.
    \end{cases}
\label{eqn:count_covariance}
\end{equation}

\noindent
Next, consider the random vector 

\[\mathbf{X} = \left(\mathbf{X}^{(1)}, \mathbf{X}^{(2)}, ..., \mathbf{X}^{(\vert B_k \vert)}\right),\]

\noindent
with components defined as 
\[
X^{(j)}_\lambda \vcentcolon= c_\lambda - c^{(j)}_\lambda.
\]
Under the same assumption, $\mathbf{X}$ also multivariate Gaussian distributed: \[
\mathbf{X} \sim \N(\mathbf{0}, \boldsymbol{\Tilde{\Sigma}})
\]
where $\boldsymbol{\Tilde{\Sigma}}$ is a block matrix with components
\[
\Tilde{\Sigma}^{(b, b')}_{\lambda, \lambda'} \vcentcolon= \Cov\left[X^{(b)}_\lambda, X^{(b')}_{\lambda'}\right],
\]
where upper indices index blocks and lower indices index within blocks.
To evaluate $\boldsymbol{\Tilde{\Sigma}}$ we substitute in the definitions of $X_{\lambda}^{(b)}$ and $c_\lambda$ and using $\Cov\left[c_\lambda^{(b)},c_{\lambda'}^{(b')}\right] = \delta_{b,b'} \Sigma_{\lambda, \lambda'}$ to obtain
\begin{equation}
    \Tilde{\Sigma}^{(b,b')}_{\lambda, \lambda'} = \left(\delta_{b,b'}- \frac{1}{\vert B_k \vert}\right)\Sigma_{\lambda, \lambda'},
\label{eqn:covariance_X}
\end{equation}

\noindent
where $\Sigma_{\lambda, \lambda'}$ is given by equation \eqref{eqn:count_covariance}. 
We now construct the test statistic $Q(B_k)$, defined as
\[
Q(B_k) \vcentcolon= \sum_{\lambda=0}^{\Lambda}\sum_{v_j \in B_k} \left(X^{(j)}_\lambda\right)^2 = \sum_{\lambda=0}^{\Lambda}\sum_{v_j \in B_k} \left(c_\lambda - c^{(j)}_\lambda\right)^2
\]
which is the sum of squares of zero-mean, correlated normal random variables. 

A standard result from classical statistics states that the sums of squared, correlated, normal random variables is identical in distribution to a weighted sum of squares of independent, standard normal random variables. In particular, we have that
\[
Q(B_k) \sim \sum_{\lambda=0}^{\Lambda}\sum_{v_j \in B_k} w^{(j)}_\lambda \left(Z^{(j)}_\lambda\right)^2,
\]
where
\[
Z^{(j)}_\lambda \stackrel{iid}{\sim} \N(0,1)
\]
and $w^{(j)}_\lambda$, for $\lambda \in (0, 1, 2, ..., \Lambda)$, $j \in (1, 2, ..., \vert B_k \vert)$, are the eigenvalues of $\boldsymbol{\Tilde{\Sigma}}$\cite{mathai_quadratic_1992}. It follows from standard definitions that $Q(B_k)$ is distributed as a central, generalised chi-squared distribution with weight parameters $\{w_{\lambda}^{(j)}\}$ and corresponding degrees of freedom vector $\boldsymbol{\nu} = (1, 1, ..., 1)$\cite{moschopoulos_distribution_1984}. For shorthand we denote this as $Q(B_k) \sim \chi^2_{\boldsymbol{w}, \boldsymbol{\nu}}$.

Finally, note that if $N \gg 1$, as assumed in the theorem, then to $\mathcal{O}(1/N)$ accuracy, we can replace the $\pi_\lambda$ in equation \eqref{eqn:covariance_X} with their unbiased estimators $\frac{c_\lambda}{N}$. We thus conclude that the test statistic $Q(B_k)$ is approximately distributed as $Q(B_k) \sim \chi^2_{\boldsymbol{w}, \boldsymbol{\nu}}$ with $\boldsymbol{w}$ and $\boldsymbol{\nu}$ as defined in the theorem statement. We observe that the null hypothesis is violated at significance level $\alpha$ if $Q(B_k) > \chi^2_{\boldsymbol{w}, \boldsymbol{\nu}}(\alpha)$, as required. 
\end{proof}

\subsection{Proof of Theorem \ref{theorem:fewer_random_walks} (Fewer Random Walks)}

\begin{proof}
This follows immediately from (\ref{epsilon_k}) by applying $\varepsilon \geq \epsilon(k+1, P^*)$ and taking the upper bound on $N$. The reason that $k+1$ appears rather than $k$ is to account for the possibility that the null path is more probable than the $k^{th}$ most likely non-null path.
\end{proof}

\section{Implementation Details}

\subsection{A Gamma Distribution Approximation for a Generalised Chi-Squared Distribution}
\label{sec:gamma_distribution_approximation}

\textbf{Motivation} Implementing the hypothesis test of Theorem \ref{hypothesis_test} requires computing critical values of generalised chi-squared distributions. Unfortunately, few numerical packages implement generalised chi-squared distributions, and those that do rely on numerical integration each time a critical value is computed. We deemed this to be too computationally costly for our application. To reduce computation and simplify the implementation we instead approximate the generalised chi-squared distribution with a gamma distribution\footnote{Approximating skew distributions by gamma mixture models is a standard approach known as the Satterthwaite-Welch method \cite{welch_significance_1938,satterthwaite_approximate_1946}. In our work, we were able to obtain a satisfactory approximation using just a single gamma distribution.}. 

\textbf{Method} A generalised chi-squared distribution $\chi^2_{\boldsymbol{w}, \boldsymbol{\nu}}$ has mean $\mu = \sum_i w_i \nu_i$ and variance $\sigma^2 = 2 \sum_i w_i^2 \nu_i$\cite{davies_numerical_1973} where, in the context of Theorem \ref{hypothesis_test}, $\boldsymbol{w}$ are the eigenvalues of the block matrix $\Tilde{\Sigma}^{(b, b')}_{\lambda, \lambda'}$ and $\boldsymbol{\nu}$ is a vector of ones. We shall approximate $\chi^2_{\boldsymbol{w}, \boldsymbol{\nu}}$ by a gamma distribution with the same mean and variance. This approximation incurs a significant computational advantage since the distribution parameters can be computed without performing the eigenvalue decomposition that is implied in the statement of Theorem \ref{hypothesis_test}. To see why, notice that in computing $\mu$ and $\sigma^2$ it suffices to know only the sum and sum of squares of the eigenvalues of $\Tilde{\Sigma}^{(b, b')}_{\lambda, \lambda'}$. Standard linear algebra manipulations then imply
\begin{equation}
\label{eqn:gammamean}
\mu = \sum_{\lambda=0}^{\Lambda} \sum_{j = 1}^{\vert B_k \vert} w_\lambda^{(j)} = Tr\boldsymbol{\Tilde{\Sigma}} = (\vert B_k \vert - 1)  \sum_{\lambda=0}^{\Lambda} \Sigma_{\lambda, \lambda} 
\end{equation}
and 
\begin{equation}
\begin{split}\label{eqn:gammavariance}
\sigma^2 &= 2\sum_{\lambda=0}^{\Lambda} \sum_{j = 1}^{\vert B_k \vert} \left(w_\lambda^{(j)}\right)^2\\ &= 2 Tr \left( \boldsymbol{\Tilde{\Sigma}}^2 \right)\\ &= 2 \left(\vert B_k \vert - 1\right) \sum_{\lambda,\lambda'=0}^{\Lambda} \Sigma^2_{\lambda, \lambda'},
\end{split}
\end{equation}
where the matrix components $\Sigma_{\lambda, \lambda'}$ are defined in \eqref{eqn:count_covariance}. Thus $\mu$ and $\sigma^2$ can be calculated without knowledge of the eigenvalues of $\Tilde{\Sigma}^{(b, b')}_{\lambda, \lambda'}$. The gamma distribution having the same mean and variance is then $Gamma(\alpha, \beta)$, with rate parameter $\beta = \frac{\mu}{\sigma^2}$ and scale parameter $\alpha = \frac{\mu^2}{\sigma^2}$. For the databases we tested on, we found this to be an excellent approximation, with the gamma-estimated critical values being at most $1\%$ different from the exact values.

\subsection{The Computational Complexity of \space \ourspaths}

From the pseudocode in Section \ref{section:ours:path-symmetry}, we see that the complexity of $\ourspaths$ depends on the computational complexity of the unsupervised clustering algorithm and the hypothesis test. Here we evaluate the computational complexity of these component stages and thus the overall algorithm. In our calculations, we assume that we are clustering $n$ nodes amongst which there are $\Lambda_l$ unique path signatures of length $l \leq L$, and let $\Lambda \vcentcolon= \sum_{l=1}^{L} \Lambda_l$.

\noindent
\textbf{Complexity of Unsupervised Clustering} For illustration we consider two clustering algorithms: BIRCH \cite{zhang_birch_1996}, which has $\mathcal{O}(n)$ complexity, and 2-means (i.e. the $k$-means algorithm \cite{forgy_cluster_1965} with $k=2$), which has $\mathcal{O}(n^2)$ complexity.

\noindent
\textbf{Complexity of the Hypothesis Test} Recall that we use the gamma distribution approximation to the generalised chi-squared distribution (see Section \ref{sec:gamma_distribution_approximation}). Examining equations \eqref{eqn:gammamean} and \eqref{eqn:gammavariance} we see that one application of the hypothesis test for exact order-$l$ path-symmetry is $\mathcal{O}(\Lambda_l^2)$ complexity. When testing whether a given set of nodes is path-symmetric, in the best case we have to perform the order-$l$ exact hypothesis test just once (considering only paths of length $l=L$, and then failing the test). In the worst case we have to apply it $L$ times (considering paths separately of all lengths $L, L-1, \dots, 2, 1$, with the test passing at each stage).

\noindent
\textbf{Overall Complexity} The clustering in $\ourspaths$ is hierarchical using binary cuts, so in the best case we require $\mathcal{O}(\ln_2{n})$ clusterings and hypothesis tests (in the worst case we require $\mathcal{O}(n)$). It follows from standard computation that the best-case complexity of $\ourspaths$ is $\mathcal{O}(\ln_2{n}\left(n + \Lambda^2\right))$ when using BIRCH clustering and $\mathcal{O}(n^2 + \Lambda^2\ln_2{n})$ when using 2-means.

\subsection{Hierarchical Clustering Algorithm}

Here we present pseudocode for our $\hcluster$ algorithm. The algorithm is discussed in overview in Section \ref{section:h-clustering}.

\begin{algorithm}[!htbp]
\caption{$\hcluster$}
\label{hierarchical_clustering_algorithm}
\textbf{Input:}{$\mathcal{H}=(V,E)$, a hypergraph}\\
\textbf{Parameters}{$\lambda^{max}_2$, the maximum value for the second smallest eigenvalue of the symmetric Laplacian matrix of the graph clusters $n_{min}$, the minimum graph cluster size}\\
\textbf{Output:}{$H=\{\mathcal{H}_1, \mathcal{H}_2, ..., \mathcal{H}_k\}$, a set of hypergraphs with vertex sets $V_1, V_2, ..., V_k$ and edge sets $E_1, E_2, ..., E_k$ satisfying $V = V_1 \bigcup V_2 \bigcup ... \bigcup V_k$ and $E = E_1 \bigcup E_2 \bigcup ... \bigcup E_k$.}

$\mathcal{G} \xleftarrow{} \texttt{ConvertToGraph}(\mathcal{H})$ \tcp*{Sec.~\ref{section:h-clustering}} 

$G \xleftarrow{} \texttt{GetClusters}(\mathcal{G})$ \tcp*{Alg.~\ref{get_clusters}}

$H \xleftarrow{} \emptyset$ \\
\For{$\mathcal{G}_i \in G$}{
$\mathcal{H}_i \xleftarrow{} \texttt{ConvertToHypergraph}(\mathcal{G}_i)$ \tcp*{Sec.~\ref{section:h-clustering}}
Add $\mathcal{H}_i$ to $H$
}
\Return{$H$}
\end{algorithm}

\begin{algorithm}[!htbp]
\caption{GetClusters}
\label{get_clusters}
\textbf{Input:}{$\mathcal{G}$, a graph}\\
\textbf{Parameters:}{$\lambda^{max}_2$, $n_{min}$}

Compute $\lambda_2$ and $\mathbf{v}_2$, the second smallest eigenvalue and eigenvector of  $\mathcal{G}$'s symmetrical Laplacian matrix $L^{sym}$. \\

\uIf{$\lambda_2 > \lambda^{max}_2$}{
\Return{$\mathcal{G}$}}
\Else{
$\mathcal{G}_1, \mathcal{G}_2 \xleftarrow{} \texttt{CheegerCut}(\mathcal{G}, \mathbf{v}_2)$ \tcp*{See e.g. \cite{chang_cheegers_2017}} 

\uIf{either $\mathcal{G}_1$ or $\mathcal{G}_2$ has fewer nodes than $n_{min}$}{
\Return{$\mathcal{G}$}} 
\Else{
\Return{\texttt{GetClusters}($\mathcal{G}_1$), \texttt{GetClusters}($\mathcal{G}_2$)}}}
\end{algorithm}

\end{document}